\def\forarxiv{0}
\tikzset{
    -Latex,auto,node distance =1 cm and 1 cm,semithick,
    state/.style ={ellipse, draw, minimum width = 0.7 cm},
    point/.style = {circle, draw, inner sep=0.04cm,fill,node contents={}},
    square/.style = {rectangle, draw, inner sep=0.04cm,fill,node contents={}},
    bidirected/.style={Latex-Latex,dashed},
    el/.style = {inner sep=2pt, align=left, sloped}
}
\theoremstyle{plain}
\newtheorem{theorem}{Theorem}
\newtheorem{proposition}[theorem]{Proposition}
\newtheorem{lemma}[theorem]{Lemma}
\theoremstyle{definition}
\newtheorem{assumption}{Assumption}
\newtheorem{remark}{Remark}
\definecolor{darkgreen}{rgb}{0,0.5,0}
\definecolor{darkred}{rgb}{0.7,0,0}
\definecolor{teal}{rgb}{0.3,0.8,0.8}
\definecolor{blue}{rgb}{0,0,1}
\definecolor{purple}{rgb}{0.5,0,1}
\newcommand{\kibitz}[2]{\ifnum\Comments=1\textcolor{#1}{#2}\fi}
\begin{document}

%

%
\runningauthor{Adarsh Subbaswamy, Roy Adams, Suchi Saria}

\twocolumn[

\aistatstitle{Evaluating Model Robustness and Stability to Dataset Shift}

\aistatsauthor{ Adarsh Subbaswamy$^{*}$, Roy Adams$^*$  \And  Suchi Saria }

\aistatsaddress{Johns Hopkins University \And Johns Hopkins University \& Bayesian Health} ]


\begin{abstract}
    As the use of machine learning in high impact domains becomes widespread, the importance of evaluating safety has increased. An important aspect of this is evaluating how robust a model is to changes in setting or population, which typically requires applying the model to multiple, independent datasets. Since the cost of collecting such datasets is often prohibitive, in this paper, we propose a framework for analyzing this type of stability using the available data. We use the original evaluation data to determine distributions under which the algorithm performs poorly, and estimate the algorithm's performance on the ``worst-case'' distribution. We consider shifts in user defined conditional distributions, allowing some distributions to shift while keeping other portions of the data distribution fixed. For example, in a healthcare context, this allows us to consider shifts in clinical practice while keeping the patient population fixed. To address the challenges associated with estimation in complex, high-dimensional distributions, we derive a ``debiased'' estimator which maintains $\sqrt{N}$-consistency even when machine learning methods with slower convergence rates are used to estimate the nuisance parameters. In experiments on a real medical risk prediction task, we show this estimator can be used to analyze stability and accounts for realistic shifts that could not previously be expressed. The proposed framework allows practitioners to proactively evaluate the safety of their models without requiring additional data collection.
\end{abstract}

 \section{Introduction}
A high quality evaluation of a machine learning (ML) model must allow a user to determine if the model will help safely and effectively achieve their goals in the deployment environment. Such an evaluation should demonstrate whether a model will perform well in the deployment environment, which often differs from the environment in which training data was gathered (i.e., the case of \emph{dataset shift} \citep{quionero2009dataset}). Additionally, it should test whether the model will perform well across all relevant subpopulations and whether performance will deteriorate in unexpected ways as the deployment environment evolves over time. Historically, model evaluation methods have focused on performance on new or heldout data that ``looks similar’’ to the training data. Existing tools, such as cross-validation and the bootstrap, satisfy the assumption that the training and test data are drawn from the same distribution, but fail to capture potential differences between training and deployment environments. 
 
To capture such differences, a common practice is to evaluate performance on multiple, independently collected datasets. While this helps to address differences that can exist across environments, this approach has limitations. If we cannot completely characterize how the datasets differ, or if the datasets are not sufficiently diverse, then a user can draw only limited conclusions about how the model will perform in the deployment environment. To counteract this, one could consider targeted collection of additional datasets which differ in structured ways (e.g., collecting data with differing demographics). This approach can be prohibitively costly, or in some cases, impossible. For example, we can not ethically collect new loan approval datasets in which we forcibly vary customer spending habits. Thus, to fully capture the performance of a model in real-world environments, we need methods for analyzing the \emph{stability} of a model. Such stability analyses should demonstrate the range of environments and subpopulations in which a model performs well and which types of changes in environment will degrade performance.
 
Unfortunately, we lack methodology for performing stability analysis, which is increasingly needed as ML systems are being deployed across a number of industries, such as health care and personal finance, in which system performance translates directly to real-world outcomes (see, e.g., \citet{subbaswamy2020development} for a discussion of the need for shift-stable models in healthcare). Further, as regulations and product reviews become more common across industries (e.g., the United States Food and Drug Administration (FDA) currently regulates ML systems for medical applications), system developers will be expected to produce high quality evaluations to prove the safety of their systems \citep{food2019proposed,food2020artificial}. For example, consider evaluating a model trained to diagnose a disease $Y$ from a set of covariates $X$ (e.g., age, medical history, treatments). We may wish to learn what differences across hospitals would lead to model failures, such as changes to the patient demographics or differences in clinical treatment practices.

\begin{figure}[t]
\vspace{-0.1in}
\centering
 \includegraphics[scale=0.33]{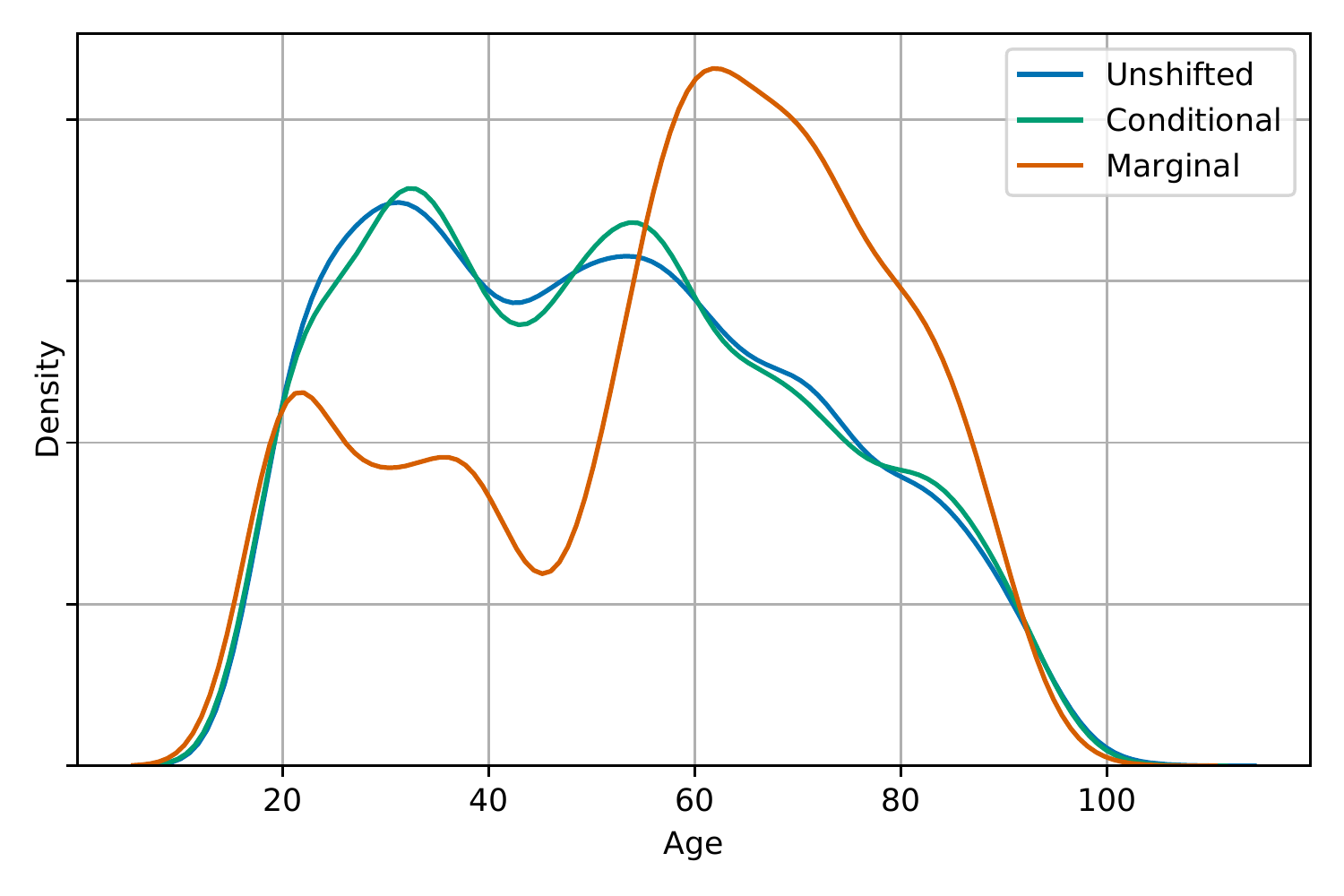}
 \caption{Age distribution of the worst-case subsamples resulting from no shift (Unshifted), shifts in the marginal distribution of features (Marginal), and shifts in the way tests are ordered (Conditional) for subsample size $(1-\alpha)= 0.6$. The age distribution changes substantially under covariate shift.}
  \label{fig:age}
  \vspace{-0.2in}
\end{figure}

One approach to stability analysis is through the lens of \emph{distributionally robust optimization} (DRO) \citep{ben2013robust,duchi2016statistics}. Instead of examining a model's performance only on the (empirical) test distribution associated with a particular validation dataset, DRO defines an \emph{uncertainty set} of possible test distributions and considers the model's performance on the \emph{worst-case} distribution chosen from this set. The distributionally robust perspective allows us to identify the types of environments which lead to poor model performance. To establish the stability of a model under the many possible ways in which environments can change, we need to be able to evaluate worst-case performance under a corresponding variety of shifts. This requires a flexible framework for specifying how the data distribution can change that can reflect targeted changes in environment. Previous shift formulations which consider, e.g., shifts in the entire joint distribution $P(X,Y)$ (e.g., \citet{ben2013robust,duchi2016statistics}) or shifts in the covariate distribution $P(X)$ (e.g., \citet{duchi2019distributionally,chen2016robust}), cannot express finer-grained shifts which isolate changes in the characteristics of a population from decisions made based on those characteristics.

To illustrate, suppose we were interested in analyzing the stability of a diagnosis model's performance to changes in the way clinicians order tests, while keeping the underlying patient population fixed. If the model's predictions depend heavily on the results of a particular test and it is deployed to a hospital where that test is not common, the model could become unsafe to use. This requires specifying a shift in the distribution of test orders $P(\textnormal{test order} \mid \textnormal{age, patient history})$. Previous formulations that do not keep the patient population fixed (e.g., covariate shift) may identify different indicators of poor performance than what we seek to evaluate. To see this (Fig \ref{fig:age}), we evaluated a real diagnostic model's robustness to covariate shift, which resulted in a worst-case age distribution (orange) that differed substantially from the observed age distribution (blue). By using the more flexible framework described in this work, we were able to evaluate a worst-case change in the distribution of test orders that did not affect the age distribution (green).

In this paper, we develop a method for analyzing the stability of models to dataset shift without requiring the collection of new datasets. To evaluate a model under different shifts, users specify a set of \textit{immutable} variables whose distribution should remain fixed, a set of \textit{mutable} variables whose distribution is allowed to change, and a population proportion. The method then identifies the subpopulation of the specified proportion with the worst average loss. This subpopulation is chosen based on the values of the mutable and immutable variables, but is constrained so that the distribution of the immutable variables matches that of the full sample. If the model performs well on this subpopulation, this indicates robustness to the specified type of shift. Conversely, if the method identifies a subpopulation with low performance, this subpopulation may be used to guide further model development or as part of a ``warning label'' for users of the model.

\textbf{Contributions:} First, by defining shifts in conditional distributions, we generalize previous DRO formulations which consider shifts in marginal or joint distributions (Section \ref{sec:prob-form}). This extends our ability to analyze stability to realistic scenarios that were not possible before. Second, we propose the first $\sqrt{N}$-consistent method for estimating the worst-case risk under these distribution shifts (Section \ref{sec:theory}). Finally, on a medical prediction task, we demonstrate that this method can be used to compare the stability of models and identify conditions under which a model would be unsafe to use.

\section{Methods}\label{sec:prob-form}

We are interested in evaluating a prediction algorithm
$\mathcal{M}:\mathcal{X}\rightarrow \mathcal{Y}$ which has been trained to
predict a target variable $Y$ with support set $\mathcal{Y}$ from a set of covariates
$X$ with support set $\mathcal{X}$. 
As a running example, consider evaluating a model
for diagnosing a disease (i.e., $Y$ is a binary
label for the presence of the disease) from a patient's medical history.
As in the case of a third
party reviewer,
we will assume that $\mathcal{M}$ is fixed and that we are evaluating the performance of
$\mathcal{M}$ on a fixed test dataset $\mathcal{D} =
\{(x_i,y_i)\}_{i=1}^N$ drawn i.i.d. from some test distribution $P$. 
Classically, to evaluate $\mathcal{M}$, we would select a loss function $\ell: \mathcal{Y}\times\mathcal{Y} \rightarrow \mathbb{R}$ and estimate the expected loss under the test distribution $P$ as $\mathbb{E}_P[\ell(Y,\mathcal{M}(X))] \approx \frac{1}{N}\sum_i \ell(y_i,\mathcal{M}(x_i))$. However, in addition to the expected loss under $P$, in many practical applications we would like to know how robust the model's expected loss is to changes or differences in $P$, referred to as distribution shift. 
In this section, we describe how to estimate the expected loss of $\mathcal{M}$ under worst-case shifts in the distribution $P$.
We proceed by formally defining distribution shifts, specifying our objective for evaluating model performance under shifts, and, finally, deriving an estimator for this performance.

\subsection{Defining Distribution Shifts}
\label{sec:defining_shifts}
To define general distribution shifts, we will partition the variables into three sets. Let $Z \subset \{X, Y\}$ be a set of \emph{immutable} variables whose marginal distribution should remain fixed, $W \subset \{X, Y\}\setminus Z$ be a set of \emph{mutable} variables whose distribution (given $Z$) we allow to vary, and $V = \{X, Y\} \setminus (W \cup Z)$ be the remaining \emph{dependent} variables. Let $\mathcal{Z}$, $\mathcal{W}$, and $\mathcal{V}$ be their respective support sets. This partition of the variables defines a factorization of $P$ into $P(V \mid W,Z) P(W \mid Z) P(Z)$. Then, we consider how model performance changes when $P(W \mid Z)$ is replaced with a new distribution $Q(W \mid Z)$, while leaving $P(Z)$ and $P(V \mid W, Z)$ unchanged. Notably, this formulation generalizes other commonly studied instances of distribution shift. For example, if we let $Z=\emptyset$ and $W = X$, then this corresponds to a \emph{marginal} \citep{duchi2019distributionally} or \emph{covariate shift} \citep{shimodaira2000improving,sugiyama2007covariate}.

The choice of $Z$ and $W$ determines the type of shift we are interested in. Returning to our diagnosis example, setting $Z=\emptyset$ and $W$ to the set of patient demographic and history variables allows us to evaluate what would happen if the patient population were to change. On the other hand, we may also wish to know how our diagnostic algorithm will perform under changes in treatment policies employed by hospitals when the underlying patient population remains the same. For example, clinicians often order tests (e.g., blood tests) in order to inform diagnoses. By setting $W$ to the binary indicator for a particular test order and $Z$ to include patient information used by doctors to decide on a test order, we can evaluate how performance varies under changes in the way clinicians order this test.

\begin{remark}[Shifts supported by the data]
The types of shifts that we can consider with a single dataset are fundamentally limited by the amount of variation observed within that dataset. While we can change the rates at which events occur in our data, we cannot evaluate performance in situations that we have never seen. For example, we cannot reliably estimate how a model will perform in an intensive care unit (ICU) using only data from an emergency department (ED) as there are events that can occur in an ICU that cannot occur in an ED.
\end{remark}

\begin{remark}[Connection to casual inference]
Under certain conditions, shifts in a conditional distribution $P(W \mid Z)$ have an important interpretation as evaluating stability to causal \emph{policy interventions} or \emph{process changes} \citep{pearl2009causality}. That is, the effects of the shift correspond to how the distribution would change under an intervention that changes the way $W$ is generated. Informally, such an interpretation is possible when $Z$ contains sufficient variables to adjust for potential confounding between $W$ and its causal descendants in $V$. A formal description of this condition in the context of causal models is given in the Appendix.
\end{remark}

\subsection{Quantifying Performance Under Shifts}\label{sec:toy}
Our goal is to evaluate the stability of $\mathcal{M}$ to changes in $P(W\mid Z)$. We will do so by identifying subpopulations (and, thus, choices of $Q(W\mid Z)$) on which the model performs poorly. Specifically, for a user specified sample proportion $(1-\alpha)\in (0,1]$, our goal is to find the subpopulation of proportion $(1-\alpha)$ with the highest expected loss such that: (1) the subpopulation is chosen based \textit{only} on variables in $Z$ and $W$, and (2) the distribution of $Z$ in the subpopulation matches that of the full population $P$. We refer to this subpopulation as the \textbf{worst ($1-\alpha$)-subsample}. This subpopulation is a sample from the shifted distribution that corresponds directly to a shift of the type described in Section \ref{sec:defining_shifts}. 

Formally, let $h:\mathcal{W}\times\mathcal{Z} \to [0,1]$ be the function that selects data points as being in or out of the worst ($1-\alpha$)-subsample based on $Z$ and $W$. The expected loss in this subpopulation is given by $\frac{1}{1-\alpha}\mathbb{E}_P[h(W,Z)\mu_0(W,Z)]$, where $\mu_0(W,Z) \equiv \mathbb{E}_{P}[\ell(Y,\mathcal{M}(X)) \mid W,Z]$ is the conditional expected loss given $W$ and $Z$. We will further constrain $h$ so that $\mathbb{E}_P[h(W,Z)\mid Z] = 1-\alpha$ almost everywhere, which \textit{fixes} the distribution of $Z$ in the subpopulation defined by $h$. Then, we are interested in the expected loss given by the worst-case $h$ or
\begin{align}
    \label{eq:cond_primal}
    R_{\alpha,0} \equiv &\sup_{h:\mathcal{W} \times \mathcal{Z}\rightarrow [0,1]} \,\, \frac{1}{1-\alpha} \mathbb{E}_{P}\left[h(W,Z)\mu_0(W,Z)\right]\\
    \label{eq:cond_eq_constraint}
    &\text{s.t.} \quad \,\,\mathbb{E}_P[h(W,Z)\mid Z] = 1-\alpha\quad a.e.
\end{align}
We refer to $R_{\alpha,0}$ as the \textbf{worst-case risk} and it represents the expected loss on the worst ($1-\alpha$)-subsample.

\begin{figure*}[t]
 \vspace{-0.1in}
    \centering
	\begin{subfigure}{0.29\textwidth}
        \centering
		\includegraphics[width=\textwidth]{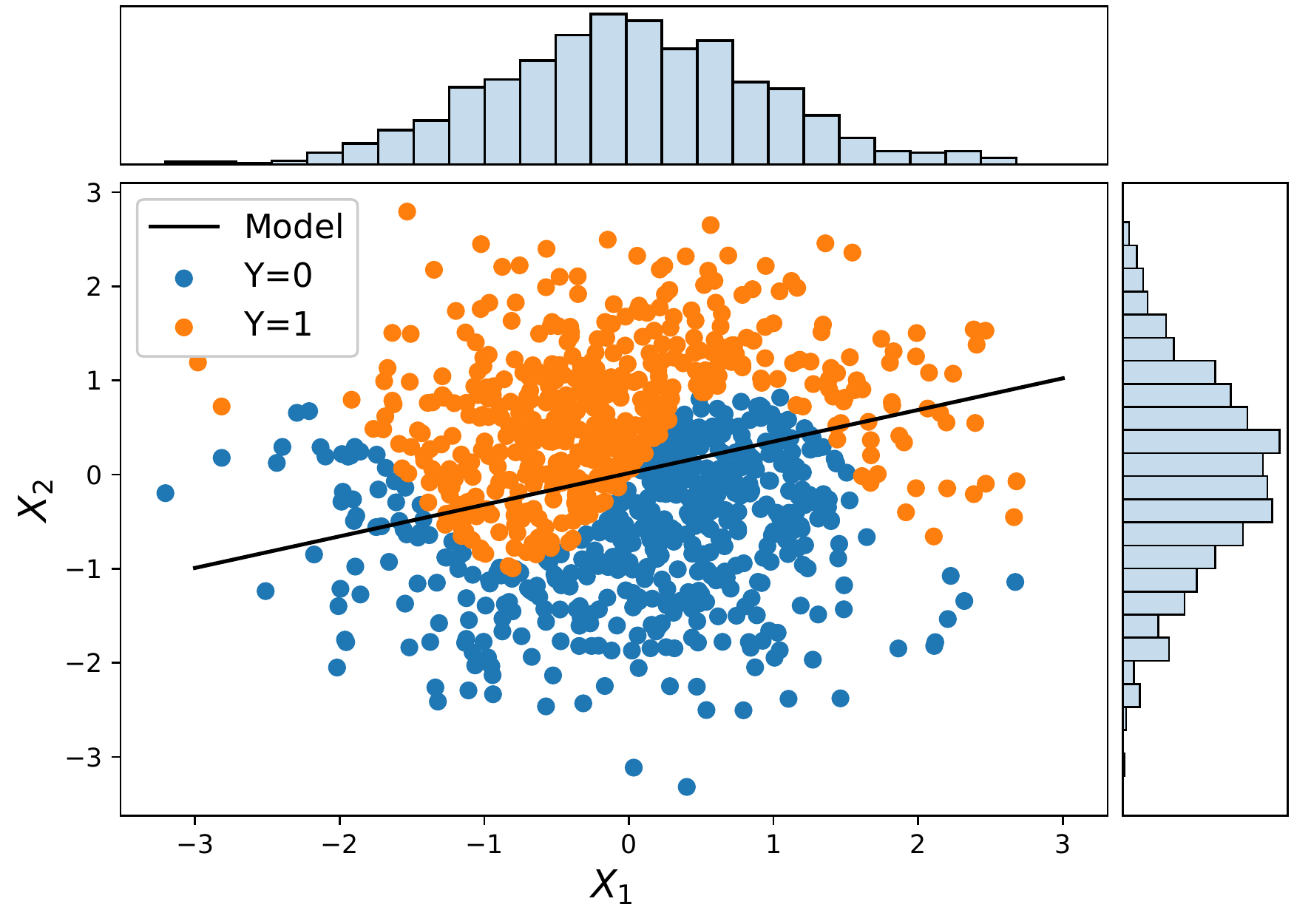}
		\caption{Data distribution}
    \end{subfigure}
    \begin{subfigure}{0.29\textwidth}
        \centering
		\includegraphics[width=\textwidth]{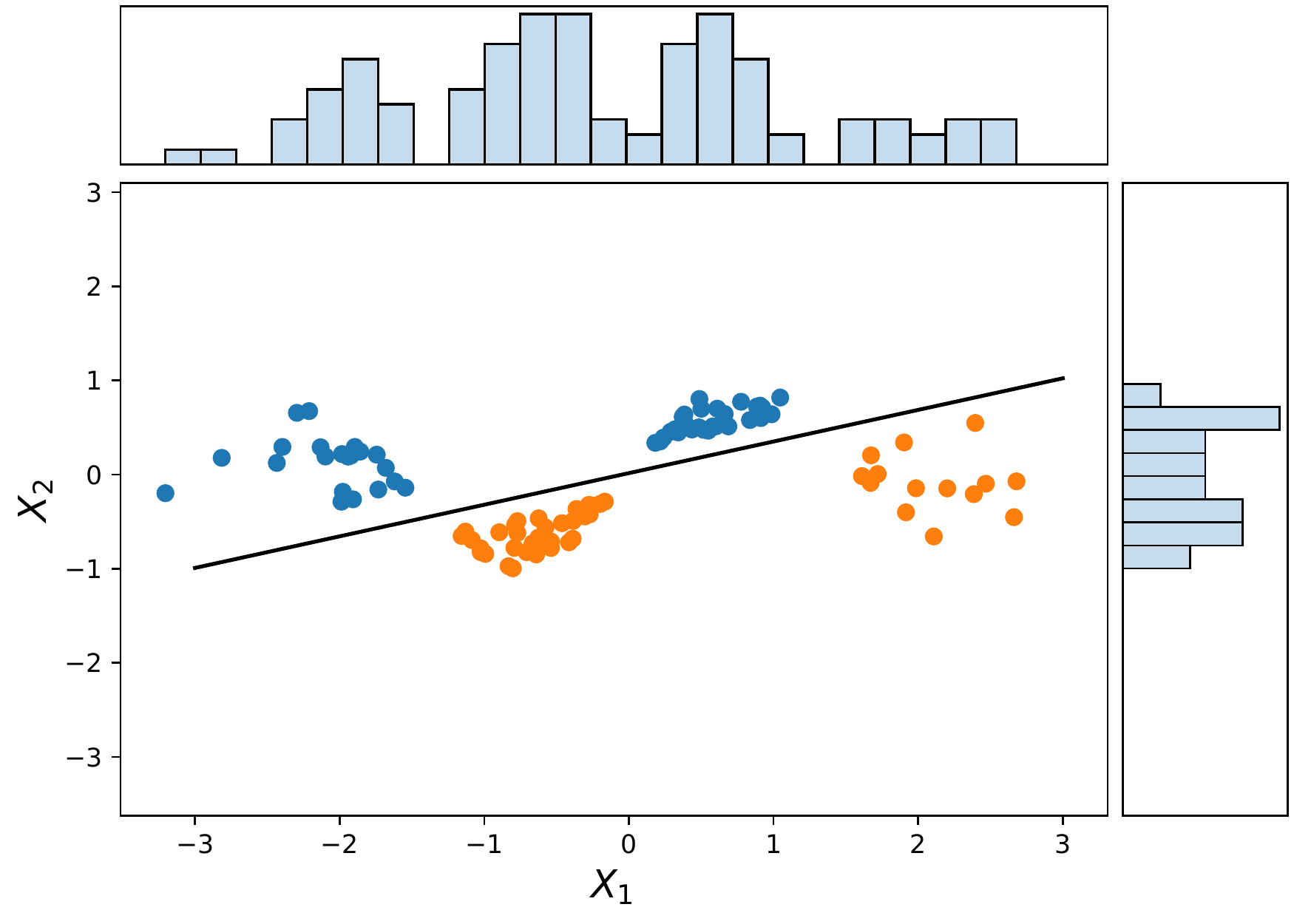}
		\caption{Shift in $P(X_1,X_2)$}
	\end{subfigure}
    \begin{subfigure}{0.29\textwidth}
        \centering
		\includegraphics[width=\textwidth]{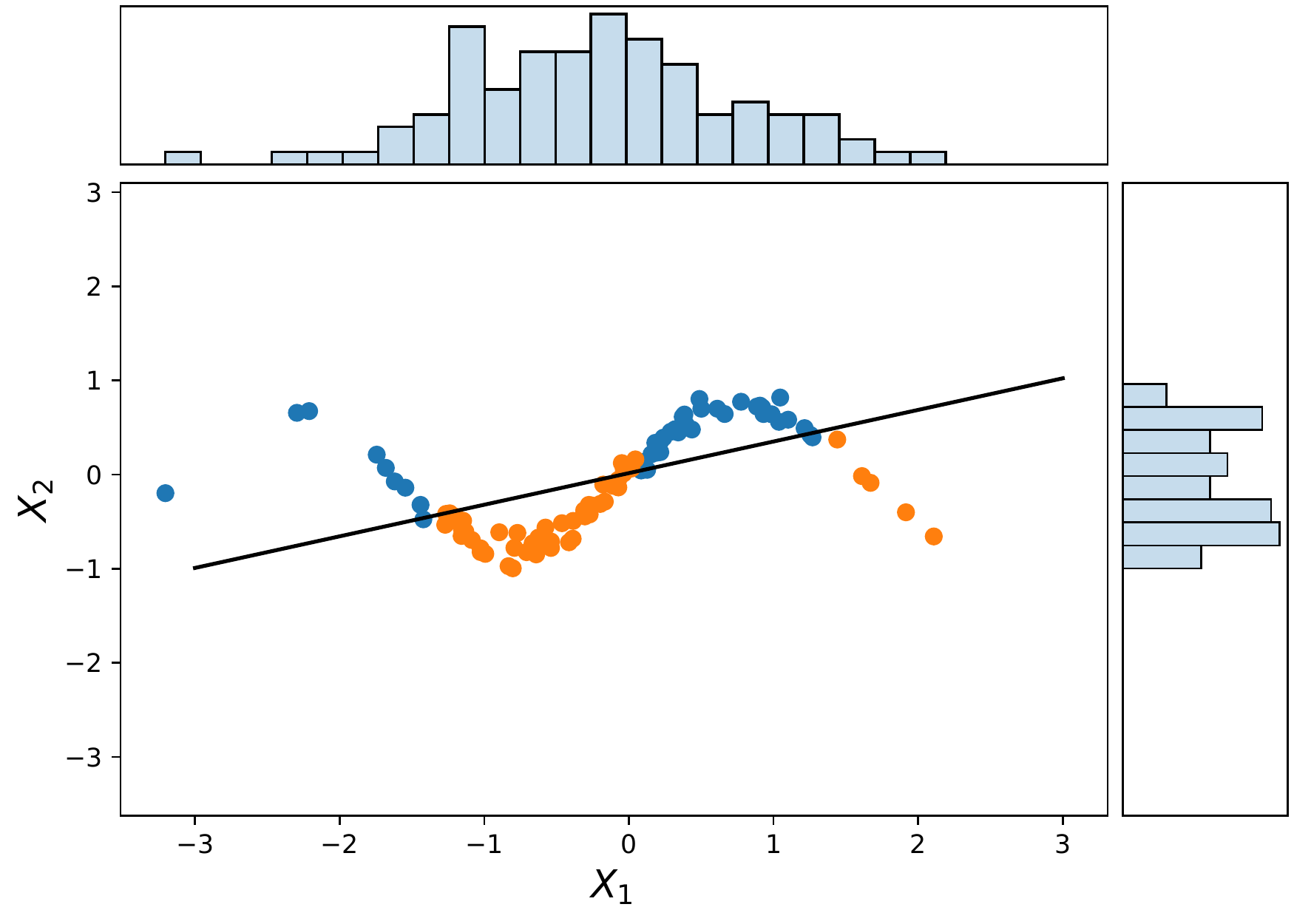}
		\caption{Shift in $P(X_2|X_1)$}
	\end{subfigure}
  \caption{(a) Samples from the data distribution described in Section \ref{sec:toy} along with the decision boundary for a linear classifier fit to this data. (b) The worst ($1-\alpha$)-subsample under a shift in $P(X_1, X_2)$. (c) The worst ($1-\alpha$)-subsample under a shift in $P(X_2|X_1)$. The marginal distributions $P(X_2)$ and $P(X_1)$ are shown on the right and top. The shift in (c) keeps the marginal $P(X_1)$ roughly the same as in (a). Results for $\alpha = 0.9$}
  \label{fig:deterministic_toy}
    \vspace{-0.2in}
\end{figure*}

To illustrate, consider a simple data generating mechanism: $ X_1,X_2 \sim \mathcal{N}(0,1)$, $Y = [X_1 > \sin(2 X_2)]$. Samples from this distribution are shown in Fig \ref{fig:deterministic_toy}a. Suppose that we want to evaluate the stability of a pre-trained linear classifier (also shown in Fig \ref{fig:deterministic_toy}a) under distribution shifts using binary cross-entropy as our loss function. Under a shift in the distribution $P(X_1,X_2)$ (i.e., $W = \{X_1,X_2\}$ and $Z=\emptyset$), Equation \ref{eq:cond_primal} simply selects the $1-\alpha$ fraction of points that yield the highest conditional loss (i.e., incorrectly classified points furthest from the decision boundary) as shown in Fig \ref{fig:deterministic_toy}b. By contrast, under a shift in the distribution $P(X_2 \mid X_1)$ (i.e., $W = \{X_2\}$ and $Z = \{X_1\}$), points producing high conditional loss are chosen subject to the additional constraint of keeping the marginal $P(X_1)$ fixed, as shown in Fig \ref{fig:deterministic_toy}c. This is a direct consequence of the constraint in Equation \ref{eq:cond_eq_constraint} and is the primary difference between the two shifts.

\begin{remark}[Connection to DRO]
In distributionally robust optimization (DRO), a model is trained to minimize the expected loss under the worst-case distribution within an uncertainty set of possible distributions. In \citet{duchi2018learning}, this uncertainty set was defined by specifying a statistical divergence and placing a fixed radius divergence ball around the data distribution. The worst-case risk defined in Equation \ref{eq:cond_primal} can equivalently be thought of as maximizing over distributions in such an uncertainty set defined by the divergence $D_{\infty}(Q \parallel P) = \log \sup_{A\in\mathcal{F}} \frac{Q(A)}{P(A)}$ where $\mathcal{F}$ is the event space for $P$ and $Q$ \citep{van2014renyi}.\footnote{We give a full mapping between the sample proportion and the radius of the divergence ball in the Appendix.} A similar connection was previously described in \citet{duchi2019distributionally}.
\end{remark}

\subsection{Estimating the Worst-Case Risk}\label{sec:estimator}
Having defined the worst-case risk $R_{\alpha,0}$ in Equation \ref{eq:cond_primal}, we now turn to the problem of estimating this risk in finite samples. While previous work on distributionally robust optimization has considered how to \emph{train} models that minimize (upper bounds on) worst-case risk (e.g., \citet{duchi2018learning}), to the best of our knowledge the problem of accurately \emph{estimating} the worst-case risk itself has not received much attention. Thus, in this section we present a consistent estimator for $R_{\alpha,0}$ to address this problem. In Section \ref{sec:theory}, we will show that this estimator has central limit properties, allowing us to easily estimate confidence intervals in settings with high-dimensional or continuous features.

Our estimator relies heavily on the dual formulation for Equation \ref{eq:cond_primal}. Defining $\mu_0$ to be the conditional expected loss as before, and following \citet{duchi2019distributionally} and \citet{duchi2018learning}, the dual is given by
\begin{align}
	\label{eq:dual}
	R_{\alpha,0} = \mathbb{E}_{P}\left[\frac{1}{1-\alpha}(\mu_0(W,Z) - \eta_0(Z))_+ + \eta_0(Z)\right]
\end{align}
where $(x)_+ = \max \{x,0\}$ is the ramp function and the function $\eta_0: \mathcal{Z} \to \mathbb{R}$ is given by
\begin{align}
	\label{eq:eta0}
	\eta_0 = \arginf_{\eta: \mathcal{Z} \to \mathbb{R}} \mathbb{E}_{P}\left[\frac{1}{1-\alpha}(\mu_0(W,Z) - \eta(Z))_+ + \eta(Z)\right]
\end{align}

A full derivation of Equation \ref{eq:dual} can be found in the Appendix. Note that the objective for $\eta_0$ is equivalent to the mean absolute deviation objective used in quantile regression. Indeed, $\eta_0$ is the conditional quantile function of the conditional expected loss $\mu_0$. Thus, estimating $\eta_0$ given $\mu_0$ is exactly quantile regression.\footnote{When $Z = \emptyset$, the difficulty of the estimation problem is significantly reduced because $\eta_0$ is a scalar.} Estimating $R_{\alpha,0}$ requires first estimating the two (potentially infinite dimensional) nuisance parameters $\mu_0$ and $\eta_0$. The dual formulation for $R_{\alpha,0}$ suggests a potential naive estimation procedure: (1) Estimate $\hat{\mu} \approx \mu_0$, (2) estimate $\hat{\eta} \approx \eta_0$ by plugging $\hat{\mu}$ into Equation \ref{eq:eta0}, and (3) estimate $\hat{R}_\alpha \approx R_{\alpha,0}$ by plugging $\hat{\mu}$ and $\hat{\eta}$ into Equation \ref{eq:dual}. When $Z$ and $W$ are low-cardinality discrete variables and $\gamma_0 = (\mu_0,\eta_0)$ is a low-dimensional vector, this simple procedure is $\sqrt{N}$-consistent, as shown in Proposition 1 of  \cite{duchi2019distributionally} for the case of $Z=\emptyset$. In most practical scenarios, $Z$ or $W$ may be continuous or high-dimensional and we will instead want to use flexible machine learning (ML)-based estimators (e.g., random forests or deep neural nets) for $\gamma_0$. However, \citet{chernozhukov2018double} showed that, due to the slow convergence rates and common practice of regularization in flexible ML methods, their use in plugin procedures like this can lead to substantial bias and poor convergence rates.

\begin{algorithm}[t]
    \SetAlgoLined
    \KwIn{Model $\mathcal{M}$, Dataset $\mathcal{D} = \{(w_i,z_i,v_i)\}_{i=1}^n$, and $K$ cross-validation folds $I_k\subset \{1,\dots,n\}$ and $I^c_k = \{1,\dots,n\} \setminus I_k$}
    \For{$k=1,\dots,K$}{
        Estimate $\hat{\mu}_k \approx \mu_0$ using data in $I^c_k$\\
        Estimate $\hat{\eta}_k \approx \eta_0$ according to Eq. \ref{eq:eta0} using $\hat{\mu}_k$ and data in $I^c_k$\\
		\For{$i \in I_k$}{
			Let $\hat{\mu}_{i} = \hat{\mu}_k(w_i,z_i)$\\
			Let $\hat{\eta}_{i} = \hat{\eta}_k(z_i)$\\
			Let $\hat{h}_{i} = [\hat{\mu}_{i} > \hat{\eta}_{i}]$\\
		}
    }
	
    Let $\hat{R}_\alpha = \frac{1}{K} \sum_{k} \frac{1}{|I_k|}\sum_{i \in I_k} \frac{1}{1-\alpha}(\hat{\mu}_{i} - \hat{\eta}_{i})_+ + \hat{\eta}_{i}$\\
    \qquad\qquad\qquad\qquad$+ \frac{1}{1-\alpha}\hat{h}_{i}(\ell(y_i,\mathcal{M}(x_i)) - \hat{\mu}_{i})$\\
    \KwResult{$\hat{R}_\alpha$, $\{\hat{h}_i\}_{i=1}^n$}
    \caption{\textsc{Worst-case Sampler}}
    \label{alg:alg1}
\end{algorithm}

To avoid these issues, following \citet{chernozhukov2018double} and \cite{jeong2020robust}, we propose a so-called ``debiased'' machine learning (DML) estimator for $R_{\alpha,0}$. Among other conditions we will prove in Section \ref{sec:theory}, this estimator maintains $\sqrt{N}$-consistency \textit{without} assuming $\sqrt{N}$ convergence rates for the estimators used for $\mu_0$ and $\eta_0$. The \textsc{Worst-case Sampler}, detailed in Algorithm \ref{alg:alg1},\footnote{Code available at \url{https://github.com/asubbaswamy/stability-analysis}.} splits the data into $K$ folds, estimates $\mu_0$ and $\eta_0$ on each fold using ML, and then combines these estimates in a way that adjusts for the slower convergence rates of the estimators for $\mu_0$ and $\eta_0$. Within the algorithm, $I_k$ represents the $k$'th fold, $\hat{\mu}_i \approx \mu_0(w_i,z_i)$ and $\hat{\eta}_i \approx \eta_0(z_i)$ are the nuisance parameter estimates for instance $i$ and $\hat{h}_{i} = [\hat{\mu}_{i} > \hat{\eta}_{i}]$ implements the indicator function that selects the worst $(1-\alpha)$-subsample. Given these, we estimate $\hat{R}_\alpha \approx R_{\alpha,0}$, the risk on the worst $(1-\alpha)$-subsample, as 
\if1\forarxiv
\begin{align}
\label{eq:r_hat}
    \hat{R}_{\alpha} = &\frac{1}{K} \sum_{k} \frac{1}{|I_k|}\sum_{i \in I_k} \frac{1}{1-\alpha}\left((\hat{\mu}_{i} - \hat{\eta}_{i})_+ + [\hat{\mu}_i \geq \hat{\eta}_i](\ell(y_i,\mathcal{M}(x_i)) - \hat{\mu}_{i})\right) + \hat{\eta}_{i}
\end{align}
\fi
\if0\forarxiv
\begin{equation}
\label{eq:r_hat}
    \begin{split}
    \hat{R}_{\alpha} = &\frac{1}{K} \sum_{k} \frac{1}{|I_k|}\sum_{i \in I_k} \frac{1}{1-\alpha}\left((\hat{\mu}_{i} - \hat{\eta}_{i})_+ \right. \\ 
    &\left. + [\hat{\mu}_i \geq \hat{\eta}_i](\ell(y_i,\mathcal{M}(x_i)) - \hat{\mu}_{i})\right) + \hat{\eta}_{i}.
    \end{split}
\end{equation}
\fi
Next, we establish the correctness of this estimator before applying it to analyze the stability of models on a real clinical diagnosis problem.

\section{Results}\label{sec:results}

In this section, we present three main results: First, we prove that the proposed estimation method has properties that allow it to reliably estimate the worst-case expected loss under distributional shift (Section \ref{sec:theory}). We also discuss confidence interval estimation, sample size considerations, and the method's limitations. Second, in the context of a practical domain, we validate that the worst-case performance estimated by the method provides meaningful information about the performance in an actual new environment (Section \ref{sec:exp2}). Third, we demonstrate that our method can be used to determine settings in which prediction models may be unsafe to use due to poor performance (Section \ref{sec:exp1}). The proposed method fills the gap created by the lack of tools for performing stability analyses to the many types of shifts that we can encounter in practice.

\subsection{Theoretical Results}\label{sec:theory}
In order to reliably use our method to evaluate model robustness and safety, it is important that (i) our estimator converges to the true worst-case loss (consistency) at reasonable rates, and (ii) we are able to account for statistical uncertainty via, for example, confidence interval estimates. We show this by proving that, despite our use of regularized ML methods, our estimator converges to the true worst-case loss at the same asymptotic rate as had we known the \textit{true} nuisance parameter values ($\sqrt{N}$-consistency). Further, we show that the estimator is asymptotically normal, allowing for easy construction of valid confidence intervals. \emph{This represents the first such estimator for worst-case expected loss under distributional shift in non-trivial settings.}

We make the following assumptions, where $\|\cdot\|_{P,q} = (\mathbb{E}_P[|\cdot|^q])^{1/q}$, $\eta(\mu)$ is the true conditional quantile function for $\mu$ (e.g., $\eta(\mu_0) = \eta_0$), and, for parsimony, we drop the dependence of $\hat{\gamma}$, $\gamma_0$, and $\ell$ on $W$, $Z$, and $V$.
\if1\forarxiv
\begin{assumption} \label{ass:conv_rate}
    Let $(\delta_N)_{n=1}^\infty$ and $(\Delta_N)_{n=1}^\infty$ be sequences of positive constants approaching $0$, let $c$, $C$, and $q$, be fixed strictly positive constants such that $q > 2$, and let $K \geq 2$ be a fixed integer. Also, let $\hat{\gamma}_k = (\hat{\mu}_k,\hat{\eta}_k)$ be the estimates of $\gamma_0 = (\mu_0,\eta_0)$ estimated on the $k$'th cross-validation fold. Then, we assume that (a) $\|\ell\|_{P,q} \leq C$ and with probability no less than $1-\Delta_N$:
    \begin{enumerate}
        \item[(b)] $\|\hat{\gamma}_k - \gamma_0\|_{P,q} \leq C$,
        \item[(c)] $\|\hat{\gamma}_k - \gamma_0\|_{P,2} \leq \delta_N$,
        \item[(d)] $\|\hat{\mu}_k - \mu_0\|_{P,\infty} \leq \delta_N N^{-1/3}$,
        \item[(e)] $\|\eta(\hat{\mu}_k) - \hat{\eta}_k\|_{P,\infty} \leq \delta_N N^{-1/3}$, and 
        \item[(f)] for $r \in [0,1]$ and $\mu_{r,k} = \mu_0 + r(\hat{\mu}_k - \mu_0)$, there exists a positive density at $\eta(\mu_{r,k})(Z)$ almost everywhere.
    \end{enumerate}
\end{assumption}
\fi
\if0\forarxiv
\begin{assumption} \label{ass:conv_rate}
    Let $(\delta_N)_{n=1}^\infty$ and $(\Delta_N)_{n=1}^\infty$ be sequences of positive constants approaching $0$, let $c$, $C$, and $q$, be strictly positive constants such that $q > 2$, and let $K \geq 2$ be a fixed integer. Also, let $\hat{\gamma}_k = (\hat{\mu}_k,\hat{\eta}_k)$ be the estimates of $\gamma_0 = (\mu_0,\eta_0)$ on the $k$'th cross-validation fold. Then, we assume that (a) $\|\ell\|_{P,q} \leq C$ and with probability $\geq 1-\Delta_N$: (b) $\|\hat{\gamma}_k - \gamma_0\|_{P,q} \leq C$, (c) $\|\hat{\gamma}_k - \gamma_0\|_{P,2} \leq \delta_N$, (d) $\|\hat{\mu}_k - \mu_0\|_{P,\infty} \leq \delta_N N^{-1/3}$, (e) $\|\eta(\hat{\mu}_k) - \hat{\eta}_k\|_{P,\infty} \leq \delta_N N^{-1/3}$, and (f) for $r \in [0,1]$ and $\mu_{r,k} = \mu_0 + r(\hat{\mu}_k - \mu_0)$, there exists a positive density at $\eta(\mu_{r,k})(Z)$ almost everywhere.
\end{assumption}
\fi
\begin{remark}[Interpreting Assumption 1]
The assumptions of most concern here are (b) and (c), which guarantee consistency of the nuisance parameter estimators, as well as (d) and (e), which bound the convergence rates of said estimators. If (b) or (c) are violated, possibly due to model misspecification, then $\hat{R}_\alpha$ will not be consistent. If (d) or (e) are violated, then $\hat{R}_\alpha$ may remain consistent, but the estimator variance given in Equation \ref{eq:variance}, and, by extension, any confidence intervals, may be incorrect. Importantly, the $N^{-3}$ rates in (d) and (e) are slower than the $\sqrt{N}$ rate we desire. This admits the use of various ML estimators for $\mu_0$ and $\eta_0$, such as ReLU neural networks.\footnote{We refer readers to the Appendix of \citet{jeong2020robust} for a discussion of other estimators.}
Assumption \ref{ass:conv_rate} (a) ensures that there are no substantial regions of the feature space with infinite expected loss. Finally, Assumption \ref{ass:conv_rate} (f) is a standard requirement for estimating quantiles which ensures that the conditional quantiles of $\hat{\mu}_k$ converge to the conditional quantiles of $\mu_0$ \citep{van2000asymptotic,jeong2020robust}.
\end{remark}

We can now guarantee the $\sqrt{N}$-consistency and central limit properties of $\hat{R}_{\alpha}$ using a version of Theorem 3.1 from \citet{chernozhukov2018double}:
\begin{theorem}\label{thm:conv}
    Under Assumption \ref{ass:conv_rate}, let $\{\delta_N\}_N$ be a sequence of positive integers converging to zero such that $\delta_N \geq N^{-1/2}$ for all $N \geq 1$. Then we have that $\hat{R}_\alpha$ concentrates in a $1/\sqrt{N}$ neighborhood of $R_{\alpha,0}$ and is approximately linear and centered Gaussian:
    \if1\forarxiv
    \begin{align}
        \sqrt{N}\sigma^{-1}(\hat{R}_\alpha - R_{\alpha,0}) = \frac{1}{\sqrt{N}} \sigma^{-1} \sum_i \psi(O_i;R_{\alpha,0},\gamma_0) + \mathcal{O}_P(\delta_N) \rightsquigarrow \mathcal{N}(0,1)
    \end{align}
    \fi
     \if0\forarxiv
    \begin{equation}
    \begin{split}
        \sqrt{N}\sigma^{-1}(\hat{R}_\alpha - R_{\alpha,0}) &= \frac{1}{\sqrt{N}} \sigma^{-1} \sum_i \psi(O_i;R_{\alpha,0},\gamma_0)\\
        &+ \mathcal{O}_P(\delta_N) \rightsquigarrow \mathcal{N}(0,1)
    \end{split}
    \end{equation}
    \fi
    where $O_i = (W_i,Z_i,V_i)$, $\gamma_0 = (\mu_0,\eta_0)$,
    \begin{align*}
        \psi(\cdot ; R_{\alpha,0},\gamma_0) = &\frac{1}{1-\alpha}\left((\mu_0 - \eta_0)_+ + [\mu_0 \geq \eta_0](\ell - \mu_0)\right)\\ &+ \eta_0 - R_{\alpha,0},
    \end{align*}
    and $\sigma^2 = \mathbb{E}_P[\psi^2(O;R_{\alpha,0},\gamma_0)]$.
\end{theorem}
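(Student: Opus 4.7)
The plan is to recognize $\hat{R}_\alpha$ as a cross-fit plug-in estimator based on a Neyman-orthogonal score, and then apply the general debiased machine learning apparatus of \citet{chernozhukov2018double} (Theorem 3.1), in the non-smooth-score regime handled by \citet{jeong2020robust}. Concretely, for
\begin{equation*}
    \psi(O; R, \gamma) = \tfrac{1}{1-\alpha}\!\left((\mu-\eta)_+ + [\mu\ge\eta](\ell - \mu)\right) + \eta - R,
\end{equation*}
I would show that $\mathbb{E}_P[\psi(O; R_{\alpha,0}, \gamma_0)] = 0$ and that $\psi$ is Neyman-orthogonal in the nuisance $\gamma = (\mu,\eta)$ at the truth. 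The master DML theorem then reduces the central limit statement to the cross-fit nuisance rate conditions in Assumption~\ref{ass:conv_rate}.

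For identification, since $\mathbb{E}_P[\ell \mid W,Z] = \mu_0$, the correction term has zero mean, so $\mathbb{E}_P[\psi(O; R_{\alpha,0}, \gamma_0)] = 0$ is exactly the dual representation in Equation~\ref{eq:dual}. For Neyman orthogonality, I would differentiate the population moment along perturbations $\mu_t = \mu_0 + t(\mu - \mu_0)$ and $\eta_s = \eta_0 + s(\eta - \eta_0)$. In the $\mu$-direction, the a.e.\ derivative of $(\mu-\eta_0)_+$ is $[\mu_0 > \eta_0]$, which is cancelled by the derivative of $-[\mu_0\ge\eta_0]\mu$ in the correction (ties having measure zero under Assumption~\ref{ass:conv_rate}(f)). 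In the $\eta$-direction, differentiating $(\mu_0-\eta)_+$ gives $-[\mu_0>\eta_0]$, while the residual $(\ell-\mu_0)$ factor annihilates the boundary-jump contribution in expectation; adding the $+\eta$ term, the conditional Gateaux derivative equals $\{1 - \tfrac{1}{1-\alpha}P(\mu_0>\eta_0\mid Z)\}\cdot(\eta-\eta_0)(Z) = 0$ almost surely, because $\eta_0$ is the conditional $\alpha$-quantile of $\mu_0$ given $Z$ by Equation~\ref{eq:eta0}.

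The hard part will be the second-order remainder bound, because both $(\mu-\eta)_+$ and the indicator $[\mu\ge\eta]$ are non-smooth, so a standard Taylor expansion does not apply. My plan is to expand $\mathbb{E}_P[\psi(O; R_{\alpha,0}, \hat{\gamma}_k) - \psi(O; R_{\alpha,0}, \gamma_0)]$ along a convex path $\gamma_{r,k} = \gamma_0 + r(\hat{\gamma}_k - \gamma_0)$, dominate the first-order term by orthogonality, and bound the remainder by a product structure $\|\hat{\mu}_k-\mu_0\|_{P,2}\cdot\|\hat{\eta}_k-\eta_0\|_{P,2}$ plus a density-weighted jump term. The jump term is exactly where the $L^\infty$ rates in Assumptions~\ref{ass:conv_rate}(d)–(e), the density condition in Assumption~\ref{ass:conv_rate}(f), and the conditional-quantile calibration $\|\eta(\hat{\mu}_k) - \hat{\eta}_k\|_{P,\infty}$ enter: together they upgrade the classical $N^{-1/4}$ requirement to the $\delta_N N^{-1/3}$ uniform rate and deliver an $o_P(N^{-1/2})$ remainder. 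This mirrors Lemma~5 of \citet{jeong2020robust}.

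Finally, the sampling-error term is handled by the cross-fitting trick: conditionally on the out-of-fold data, $\hat{\gamma}_k$ is deterministic, so each fold's empirical process term reduces to an average of mean-zero, independent summands whose variance is bounded via Assumptions~\ref{ass:conv_rate}(a)–(b). Combining the orthogonality argument, the remainder bound, and this stochastic equicontinuity control yields the linearization
\begin{equation*}
    \sqrt{N}(\hat{R}_\alpha - R_{\alpha,0}) = \frac{1}{\sqrt{N}}\sum_i \psi(O_i; R_{\alpha,0}, \gamma_0) + \mathcal{O}_P(\delta_N),
\end{equation*}
and Lindeberg–Feller applied to the leading term gives $\sqrt{N}\sigma^{-1}(\hat{R}_\alpha - R_{\alpha,0}) \rightsquigarrow \mathcal{N}(0,1)$, with $\sigma^2 = \mathbb{E}_P[\psi^2(O; R_{\alpha,0}, \gamma_0)]$ finite by Assumption~\ref{ass:conv_rate}(a) and boundedness of the remaining terms.
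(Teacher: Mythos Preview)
Your proposal is correct and follows essentially the same route as the paper: cast $\hat{R}_\alpha$ as a DML2 estimator with the linear score $\psi$, verify identification and Neyman orthogonality, invoke the framework of \citet{chernozhukov2018double} for the stochastic-equicontinuity and CLT parts, and handle the non-smooth remainder via the machinery of \citet{jeong2020robust} using Assumption~\ref{ass:conv_rate}(d)--(f). One small caution: the remainder does \emph{not} decompose into an $L^2\times L^2$ product plus a jump term---the paper instead applies the mean value theorem to $f_k(r)=\mathbb{E}_P[\psi(O;\theta_0,\gamma_0+r(\hat\gamma_k-\gamma_0))]$ and bounds $\sup_r|f_k'(r)|$ directly by $\|\hat h_k-h_0\|_{P,1}\cdot(\|\hat\eta_k-\eta_0\|_{P,\infty}+\|\hat\mu_k-\mu_0\|_{P,\infty})$, with $\|\hat h_k-h_0\|_{P,1}=\mathcal{O}(\delta_N N^{-1/6})$ coming from Lemmas~13--14 of \citet{jeong2020robust}; an $L^2\times L^2$ bound alone would give only $\delta_N^2$, which need not be $\mathcal{O}(\delta_N N^{-1/2})$ when $\delta_N\ge N^{-1/2}$.
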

An important consequence of this result is that we can further estimate the variance of $\sqrt{N}(\hat{R}_\alpha - R_{\alpha})$ as
\begin{align}\label{eq:variance}
    \hat{\sigma}^2 = \frac{1}{K}\sum_k \frac{1}{|I_k|}\sum_{i\in I_k} \psi^2(O_i;\hat{R}_\alpha,\hat{\gamma}_k),
\end{align}
and this estimate can be used to construct valid $100(1-a)\%$ confidence intervals as $(\hat{R}_\alpha \pm \Phi^{-1}(1-a/2)\sqrt{\hat{\sigma}^2/N})$ \citep{chernozhukov2018double}. 

\begin{remark}[Bias and variance for $\alpha \to 1$]
Note that the variance estimate $\hat{\sigma}^2$ scales with $1/(1-\alpha)^2$, highlighting that a large dataset may be needed to estimate $R_{\alpha,0}$ for $\alpha$ close to $1$. Additionally, if regularization is used when estimating $\mu_0$ and $\eta_0$, this may smooth over low-probability regions with high loss, resulting in potential underestimation of the worst-case risk for $\alpha$ close to $1$ unless a similarly large dataset is used.
\end{remark}

\begin{remark}[Discrete $W$]
When $W$ contains only discrete random variables, Assumption \ref{ass:conv_rate} (f) is not satisfied. In such cases, inspired by \cite{machado2005quantiles}, we augment the \textsc{Worst-case Sampler} (Alg \ref{alg:alg1}) by adding small amounts of independent uniform noise to each $\hat{\mu}_k$ before fitting $\hat{\eta}_k$ and using a slightly augmented version of Equation \ref{eq:r_hat}. This augmented procedure satisfies Assumption \ref{ass:conv_rate} (f) at the cost of an arbitrarily small, user-controlled amount of bias. Full details and theoretical results are described in the Appendix.
\end{remark}

With a reliable estimator in hand, we now demonstrate its utility on a real clinical prediction problem.

\subsection{Experimental Results} \label{sec:experiments}
In the context of a practical domain, we now demonstrate that: (a) the proposed method correctly estimates the performance of a model under adversarial distribution shifts and (b) the proposed method can be used to compare the stability of multiple models and determine shifted settings in which a model may be unsafe to use. Suppose we are third-party reviewers, such as the FDA, who wish to evaluate the safety of a machine learning-based clinical diagnostic model to changes in clinician lab test ordering patterns. Our goal is to determine test ordering patterns under which the model is likely to have poor performance. This may serve to guide further model refinement or provide a basis for determining ``warning labels'' about the indications for model use.

\subsubsection{Dataset and Models}
We demonstrate our approach on machine learning models for diagnosing sepsis, a life-threatening response to infection. We follow the setup of \citet{giannini2019machine} who developed a clinically validated sepsis diagnosis algorithm. Our dataset contains electronic health record data collected over four years at Hospital A in our institution's health network. The dataset consists of 278,947 emergency department patient encounters. The prevalence of the target disease, sepsis, is $2.1\%$. 17 features pertaining to vital signs, lab tests, and demographics were extracted. A full characterization of the dataset and features can be found in the Appendix. We evaluate the robustness of the models to changes in test ordering patterns using a held-out sample of 10,000 patients which serves as the evaluation dataset.

We consider two sepsis prediction models: The \textbf{classical} model was trained using classical supervised learning methods, while the \textbf{robust} model was trained using the ``surgery estimator'' \citep{subbaswamy2019preventing}. While both models are random forest classifiers and were trained using the same data, the robust model was trained with the goal of being stable to shifts in test ordering patterns. Because our focus is on evaluating models rather than training them, we refer readers to the Appendix for details about the training procedures.

\textbf{Setup:} To analyze stability to changes in test ordering patterns, we estimate how the classification accuracy\footnote{Accuracy is 1 minus the expected 0-1 loss.} of the classical model changes as we vary $1-\alpha$, the sample proportion. When $(1-\alpha) = 1$, the worst-case subsample is the original dataset and corresponds to no distribution shift. As $1-\alpha$ approaches 0, the worst-case subsample becomes smaller and the worst-case shifted distribution can become increasingly different from the original data distribution. The shift in test ordering patterns can be represented as shifts in the conditional distribution $P(\text{test order} \mid \text{demographics, disease status})$. By specifying demographics and disease status as immutable variables, we fix their distributions such that $P(\text{demographics, disease status})$ in the worst $(1-\alpha)$-subsample is the same as in the original dataset.

\subsubsection{Validation Experiment}\label{sec:exp2}
\vspace{-0.1in}
\begin{figure}[!t]
 \centering
	\includegraphics[scale=0.4]{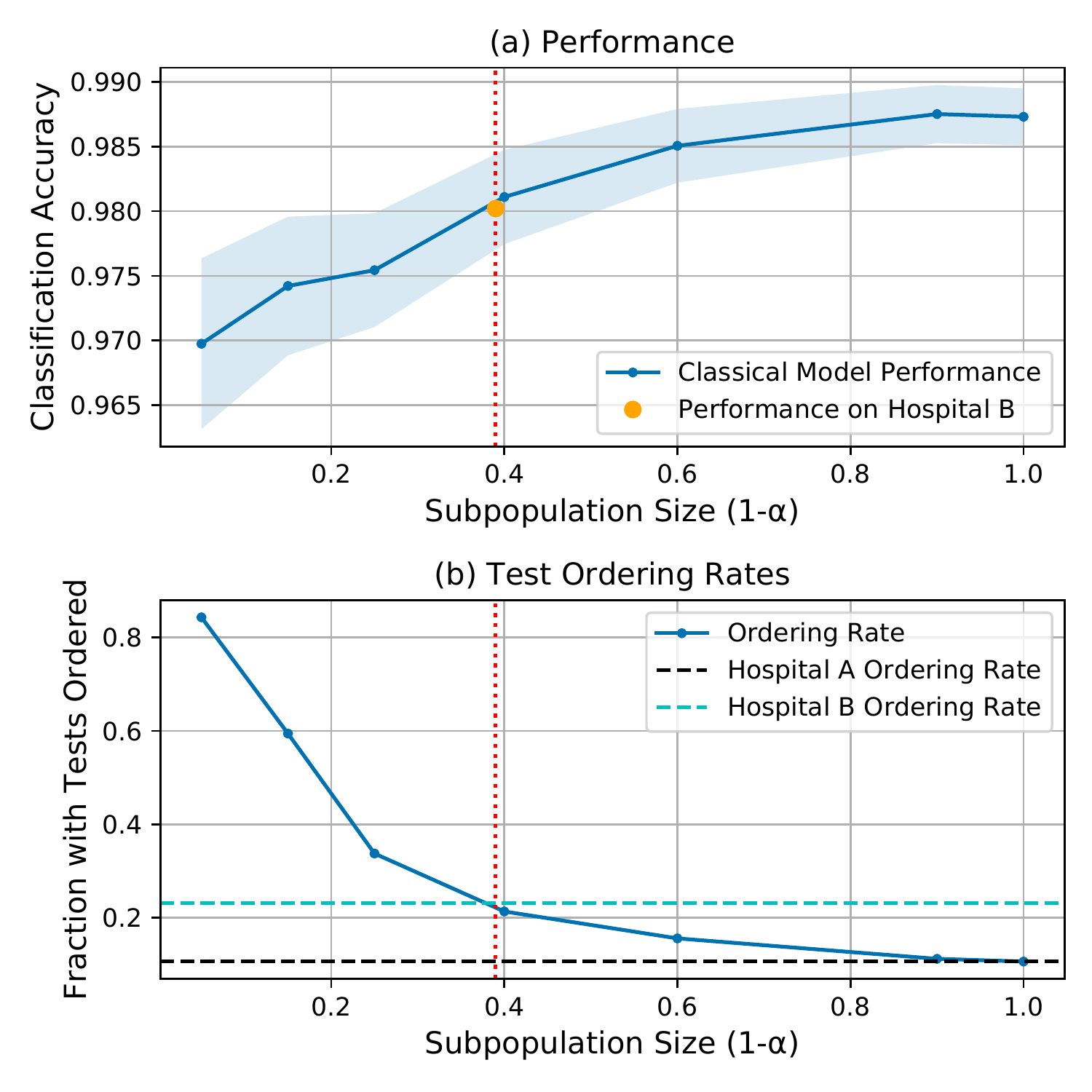}
  \caption{(a) Estimated accuracy of the classical model on its worst $(1-\alpha)$-subsamples (blue curve). The orange dot is the actual accuracy of the classical model when applied to Hospital B. The shaded blue region denotes a $95\%$ confidence interval. (b) The test ordering rate in the worst $(1-\alpha)$-subsamples. The Hospital A test ordering rate is $11\%$ while at Hospital B it is $23\%$.}
  \label{fig:validation}
      \vspace{-0.2in}
\end{figure}

We begin by validating that the method can correctly estimate the performance of a model under worst-case distribution shift, and that this provides meaningful information about the performance in new, shifted environments. To do so, we compare the estimated worst-case performance under a shift in lab test ordering patterns to the observed performance in a new environment exhibiting such a shift. For this purpose, we used an additional dataset containing the same variables collected from a different hospital (Hospital B, also in our institution's health network).\footnote{Description in the supplemental material.} Hospital B has a similar patient population (i.e., demographic makeup and disease prevalence) to Hospital A. On the other hand, the two hospitals differ substantially in the rate of test orders ($11\%$ ordering rate at Hospital A vs $23\%$ ordering rate at Hospital B). Thus, this is exactly the shift in test ordering patterns we wish to study.

Fig \ref{fig:validation}a plots the estimated worst-case performance of the classical model across sample proportions ($1-\alpha$). As expected, the model's performance worsens as the sample proportion decreases. To characterize the worst $(1-\alpha)$-subsamples, in Fig \ref{fig:validation}b we plot the fraction of the subsample that received a test (the test ordering rate) against the sample proportion. Since the test ordering rate doubles to $23\%$ from Hospital A to Hospital B, we expect the accuracy of the model at Hospital B to be greater than or equal to the worst-case accuracy for $(1-\alpha) = 0.39$ (the sample proportion producing an ordering rate of $23\%$). Indeed, we find that the accuracy at Hospital B (orange dot) is worse than the accuracy at Hospital A. Further, the accuracy at Hospital B is roughly equal to the estimated worst-case accuracy for this subpopulation size (and well within the blue shaded 95\% confidence interval). The intervals also demonstrate the relationship between the sample proportion and the estimator variance. If the intervals are too large for large $\alpha$, more data may need to be collected in order to trust the estimates. These results show that the risk curves accurately inform us about how performance changes under worst-case shifts.

\subsubsection{Use Cases for Stability Analysis}\label{sec:exp1}
\vspace{-0.1in}
\begin{figure}[t]
 \centering
	\includegraphics[scale=0.45]{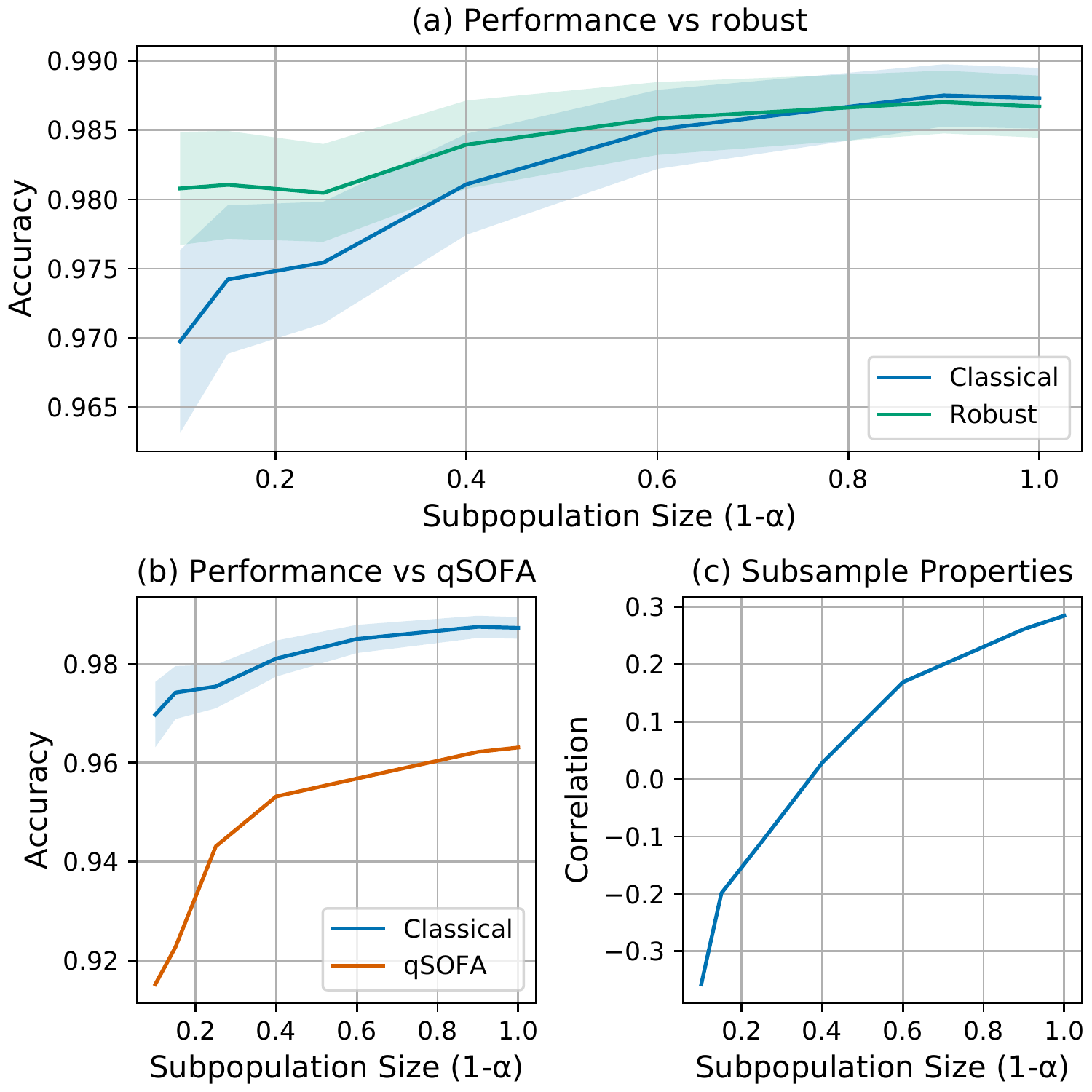}
  \caption{(a) Accuracy of the classical and robust models on their respective worst $(1-\alpha)$-subsamples. Shaded regions denote $95\%$ confidence intervals. (c) Accuracy of the classical model on its worst $(1-\alpha)$-subsamples and accuracy of qSOFA on these same subsamples. (b) Correlation between sepsis and lab test order in the classical worst-case subsamples.}
  \label{fig:conditional}
    \vspace{-0.2in}
\end{figure}

We now consider three use cases of the proposed method which demonstrate how to analyze model stability without requiring additional data gathering.

\textbf{Comparing the stability of models:} Model developers often need to compare the performance of models. We compare the stability of two models with respect to potential changes in test ordering patterns. Fig \ref{fig:conditional}a shows the estimated worst-case risk for the classical and robust models at various sample proportions ($1-\alpha$). The plot confirms that the robust model is more stable under shifts in test ordering patterns. As the subpopulation size decreases, the robust model increasingly outperforms the classical model on their respective worst-case subpopulations (doing so significantly for $(1-\alpha) \leq 0.1$). Model differences can be characterized by their worst performing test ordering patterns.

\textbf{Comparing to baseline standards:} Another important use case is to understand in what settings or on what subpopulations a model's performance becomes worse than existing baseline standards of care. For example, FDA reviewers might want to verify that a sepsis diagnosis model performs at least as well as qSOFA, a simple, established early warning score that is commonly used to detect patient deterioration \citep{singer2016third}. To make this assessment we first estimate the worst-case risk and worst ($1-\alpha$)-subsamples for the classical model and then estimate the accuracy of qSOFA on these same subsamples (shown in Fig \ref{fig:conditional}b). Across all $\alpha$ values the classical model significantly outperforms qSOFA, showing it is safe relative to standard of care under a variety of test ordering patterns.

\textbf{Determining unsafe use conditions:} Finally, model reviewers and developers may wish to understand the settings (i.e., test ordering patterns) associated with poor performance so that they can further improve the model or add warnings about its use. In Fig \ref{fig:validation}b we saw that increased test ordering rates were associated with worse performance. To investigate further, for each worst ($1-\alpha$)-subsample, we plot the correlation between a positive sepsis diagnosis and a test order within this subsample (Fig \ref{fig:conditional}c). There is a clear trend: smaller worst-case subsamples have decreasing correlation between disease status and test orders. From this we can conclude that the classical model will experience performance deterioration when applied to hospitals that widely (as opposed to selectively) test patients. However, relative to the original dataset, even a doubling of the test ordering rate results in only small performance deterioration. Thus, we may conclude that the model is safe to use under a wide range of test ordering patterns, though developers might still seek to improve the model in settings with small or negative test order correlations.

\section{Related Work}\label{sec:related}
We now overview various threads of work on the problem of dataset shift, in which the deployment environment differs from the training environment.

\paragraph{Adapting models to new environments:}
 One of the most common dataset shift paradigms assumes that the deployment environment is known and that we have limited access to data from the deployment environment \citep{quionero2009dataset}. Many works consider  the problem of learning a model using labeled data from the training environment and unlabeled data from the deployment environment, using the unlabeled data to adjust for shifts in $P(X)$ through reweighting (e.g., \citet{shimodaira2000improving,huang2007correcting}) or extracting invariant feature representations (e.g., \citet{gong2016domain, ganin2016domain}). \citet{rai2010domain} assume that we have limited capacity to query the deployment environment and use active learning techniques to adapt a model from the training environment to the deployment environment. While these types of adaptations should absolutely be conducted when possible, our goal in this work is to evaluate how a model will perform in potential future environments from which we do not currently have samples. 

\paragraph{Learning robust models:} 
Another large body of research attempts to proactively improve robustness to dataset shift by \textit{learning} models that are robust to changes or uncertainty in the data distribution. This work falls broadly under the umbrella of distributionally robust optimization (DRO) which, in turn, comes from a large body of work on formulating optimization problems affected by uncertainty (e.g., \citet{ben2013robust,duchi2016statistics, bertsimas2018data}). As in our work, DRO assumes that the true population distribution is in an uncertainty ``ball'' around the empirical data distribution and optimizes with respect to the worst-case such distribution. 
In some cases, the uncertainty set is designed to reflect sampling variability and thus the radius of the ball linearly decreases with the number of samples, but no distributional shift is assumed \citep{namkoong2016stochastic,namkoong2017variance,lei2020convergence}. 
Work on DRO explores a variety ways to define the uncertainty ball/set of distributions. Some have explored balls defined by the so-called f-divergences, which include as special cases KL divergence, $\chi^2$ divergence, and CVaR (used in this work) \citep{lam2016robust,namkoong2016stochastic,namkoong2017variance,duchi2018learning}. 
Still others consider uncertainty sets defined by Reproducing Kernel Hilbert Spaces (RKHS) via Maximum Mean Discrepancy (MMD) \citep{gretton2012kernel,staib2019distributionally,zhu2020kernel} and Wasserstein distances \citep{fournier2015rate,abadeh2015distributionally,sinha2017certifying,esfahani2018data,lei2020convergence}. Unlike approaches using f-divergences, these approaches can allow for distributions with differing support, but they are computationally challenging, often requiring restrictions on the loss or transportation cost functions. Future work may consider extensions of the proposed evaluation framework to MMD and Wasserstein-based uncertainty sets.

A related line of work defines uncertainty sets of environments using causal interventions on the data generating process which allow for arbitrary strength shifts and do not have to be centered around the training distribution \citep{meinshausen2018causality,buhlmann2020invariance}. These methods aim to learn a model with stable or robust performance across the uncertainty set of environments. Using datasets collected from multiple environments, various methods seek invariant feature subsets \citep{rojas2018invariant} or representations \citep{arjovsky2019invariant}. Alternatively, assuming knowledge of the causal graph of the data generating process, \citet{subbaswamy2019preventing} identify mechanisms that can shift and find a stable \emph{interventional} distribution to use for prediction. Assuming linear mechanisms, \citet{subbaswamy2018counterfactual} find a stable feature set that can include counterfactuals. Recent end-to-end approaches relax the need for the graph to be known beforehand by instead learning it from data \citep{zhang2020domain,subbaswamy2020spec}.

\paragraph{Evaluating robustness:} Relatively few works have focused on \textit{evaluating} the robustness of a model to distributional shift. \citet{santurkar2020breeds} proposed an algorithm for generating evaluation benchmarks for subpopulation shifts (new subpopulations that were unseen in the training data) by combining datasets with hierarchical class labels. \citet{oakden2020hidden} considered broad, sometimes manual, strategies to evaluate the performance of a model in subpopulations. Both of these start by constructing subpopulations that have semantic meaning and evaluate the model on each of the subpopulations. Thus, discovering a subpopulation with poor performance is either serendipitous or is guided by domain knowledge. In this work, we take a data-driven approach, starting with a worst-case subpopulation and then exploring its properties.

\paragraph{Estimating optimal treatment subpopulations:} 
Finally, our work is methodologically similar to that of \cite{jeong2020robust} and \cite{vanderweele2019selecting}, who sought to estimate the causal effect of a treatment in the worst- and best-case subpopulations, respectively. Whereas, in our work, we seek to find a subpopulation with the highest expected conditional loss, they seek to find a subpopulation with the highest (lowest) conditional average treatment effect, which can be formulated as the best- (worst-) case average treatment effect under a \textit{marginal} shift. A potential extension of this work is to consider optimal treatment subgroups defined by other types of shifts.

    

\section{Conclusion}

As machine learning systems are adopted in high impact industries such as healthcare, transportation, and finance, a growing challenge is to proactively evaluate the safety of these systems to avoid the high costs of failure. To this end, we proposed a framework and estimation method for proactively analyzing the stability of trained machine learning models to shifts in population or setting without requiring the collection of new datasets (an often costly and time-consuming effort). When sufficient variation is present in the available evaluation data, the proposed framework allows us to understand the changes in population or setting under which the model would be unsafe to use. As demonstrated in the experiments, this now enables us to analyze stability to realistic shifts which correspond to policies that can vary across sites, datasets, or over time. Further, this evaluation can be used to guide model refinement and additional data collection. We envision that procedures like the proposed method should become standard practice for analyzing the stability of models in new settings.

\subsubsection*{Acknowledgements}
The authors thank Berkman Sahiner, Vinay Pai, and David Nahmias for helpful questions that improved the presentation, and Alex Gain for early formative discussions. This publication was supported by the Food and Drug Administration (FDA) of the U.S. Department of Health and Human Services (HHS) as part of a financial assistance award U01FD005942 totaling $\$97,144$ with 100 percent funded by FDA/HHS. The contents are those of the author(s) and do not necessarily represent the official views of, nor an endorsement, by FDA/HHS, or the U.S. Government.


\bibliography{references}

\newpage
\appendix
\onecolumn
\section{Causally Interpreting Distribution Shifts}\label{app:causal}

Under certain conditions, shifts in a conditional distribution $P(W \mid Z)$ have an important interpretation as causal \emph{policy interventions} or \emph{process changes} \citep{pearl2009causality}. That is, the effects of the shift corresponds to how the distribution would change under an intervention that changes the way $W$ is generated. Formally, we have the following:
\begin{proposition}
Suppose the data $(X,Y)$ were generated by a structural causal model (SCM) with no unobserved confounders, respecting a causal directed acyclic graph (DAG) $\mathcal{G}$. Then, for a single variable $W$ and set $Z=nd_{\mathcal{G}}(W)$ (non-descendants of W in $\mathcal{G}$), a policy shift in $P(W \mid Z)$ can be expressed as a policy intervention on the mechanism generating $W$ which changes $P(W \mid pa_{\mathcal{G}}(W))$.
\end{proposition}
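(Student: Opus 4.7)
The plan is to unpack the SCM factorization, use the local Markov property to rewrite $P(W\mid Z)$ in terms of $P(W\mid pa_{\mathcal{G}}(W))$, and then check that the ``replace $P(W\mid Z)$ by $Q(W\mid Z)$'' shift produces exactly the same joint as the truncated factorization corresponding to a policy intervention on $W$.

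First I would record the two observations that make $Z = nd_{\mathcal{G}}(W)$ the ``right'' conditioning set. (i) $pa_{\mathcal{G}}(W) \subseteq nd_{\mathcal{G}}(W) = Z$, since no parent of $W$ can be a descendant of $W$ in a DAG. (ii) $Z$ is ancestrally closed: if $A \in Z$ and $B \in pa_{\mathcal{G}}(A)$, then $B$ is also a non-descendant of $W$, as otherwise $A$ would inherit a directed path from $W$. Because the SCM has no unobserved confounders, $P$ satisfies the Markov factorization $P = \prod_A P(A\mid pa_{\mathcal{G}}(A))$ with respect to $\mathcal{G}$, which together with ancestral closure of $Z$ yields $P(Z) = \prod_{A\in Z} P(A\mid pa_{\mathcal{G}}(A))$. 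The local Markov property then gives $P(W\mid Z) = P(W\mid pa_{\mathcal{G}}(W))$, and chain rule plus the factorization of $P$ gives $P(V\mid W,Z) = \prod_{A\in V} P(A\mid pa_{\mathcal{G}}(A))$.

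Next I would carry out the ``matching'' step. The shift considered in Section \ref{sec:defining_shifts} replaces $P(W\mid Z)$ by some $Q(W\mid Z)$ while leaving $P(Z)$ and $P(V\mid W,Z)$ intact, producing the joint
\begin{equation*}
P'(X,Y) \;=\; P(Z)\,Q(W\mid Z)\,P(V\mid W,Z).
\end{equation*}
A policy intervention on the mechanism for $W$ replaces the structural equation for $W$ (equivalently the factor $P(W\mid pa_{\mathcal{G}}(W))$) by a new conditional $\tilde Q(W\mid pa_{\mathcal{G}}(W))$, yielding the truncated factorization
\begin{equation*}
P^{\mathrm{int}}(X,Y) \;=\; \tilde Q(W\mid pa_{\mathcal{G}}(W)) \prod_{A\neq W} P(A\mid pa_{\mathcal{G}}(A)).
\end{equation*}
Substituting the three identities from the previous paragraph into $P'$, and taking $\tilde Q(W\mid pa_{\mathcal{G}}(W)) := Q(W\mid Z)$ (which is a legal conditional because, conditional on $pa_{\mathcal{G}}(W)$, the remaining variables in $Z$ are independent of $W$ in the post-shift model by the same local Markov argument applied to the modified SCM), the two joints coincide term by term.

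The main obstacle is the direction from $Q(W\mid Z)$ to a well-defined structural equation in $pa_{\mathcal{G}}(W)$ alone: a priori, $Q(W\mid Z)$ could depend on coordinates of $Z$ outside $pa_{\mathcal{G}}(W)$, in which case no single-mechanism intervention on $W$ realizes the shift. I would handle this by noting that the shift framework only alters the conditional of $W$ given $Z$ and that the mapping between the conditional and a new structural equation for $W$ is well-defined precisely when $Q(W\mid Z)$ is a function of $pa_{\mathcal{G}}(W)$ only; this is exactly the scenario identified by the proposition and is the content of ``policy intervention on the mechanism generating $W$.'' Under this restriction the two factorizations coincide, establishing the equivalence.
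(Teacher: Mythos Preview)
Your argument is correct and rests on the same key ingredient as the paper's proof: the local Markov property gives $P(W\mid nd_{\mathcal{G}}(W)) = P(W\mid pa_{\mathcal{G}}(W))$, so replacing the $W$-factor in the truncated factorization is equivalent to replacing $P(W\mid Z)$ in the $P(Z)\,P(W\mid Z)\,P(V\mid W,Z)$ decomposition. The paper's proof is much terser and runs in the opposite direction: it starts from a policy intervention $f_w\to g_w$, observes this induces a new $Q(W\mid pa(W))$, applies local Markov in the post-intervention model to get $Q(W\mid pa(W))=Q(W\mid nd(W))$, and concludes. Your version is more explicit about why the full joints match (ancestral closure of $Z$, factorization of $P(Z)$ and $P(V\mid W,Z)$), which is a genuine improvement in rigor. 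You also flag a point the paper leaves implicit: an arbitrary $Q(W\mid Z)$ that depends on coordinates of $Z\setminus pa_{\mathcal{G}}(W)$ cannot be realized by a mechanism change that keeps the parent set fixed; both your resolution and the paper's silently restrict to the case $Q(W\mid Z)=Q(W\mid pa_{\mathcal{G}}(W))$, which is really what the phrase ``changes $P(W\mid pa_{\mathcal{G}}(W))$'' in the proposition is signaling.
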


\begin{proof}
Within the SCM, we have that $W$ is generated by the structural equation $W = f_w(pa(W), \varepsilon_w)$ (where $\varepsilon$ is a $W$-specific exogenous noise random variable). A policy intervention on $W$ replaces this structural equation with a new function $g_w(pa(W), \varepsilon_w)$, which has the effect of changing $P(W \mid pa(W))$ to some new distribution $Q(W \mid pa(W)$. By the local Markov property, we have that $Q(W\mid pa(W)) = Q(W\mid nd(W))$. Thus, a shift from $P(W\mid nd(W))$ to $Q(W\mid nd(W))$ can be expressed as a policy intervention from $f_w$ to $g_w$.
\end{proof}

This result means that in order to causally interpret distribution shifts, we need to adjust for (i.e., put into $Z$) variables that are relevant to the mechanism that generates $W$. Fortunately, we can place additional variables into $Z$ so long as they precede $W$ in a causal or topological order. 

This result can be extended to the case in which the SCM contains unobserved variables. This is of practical importance because often we do not have all relevant variables recorded in the dataset (i.e., there may be unobserved confounders). In these cases, rather than a DAG, the SCM takes the graphical form of a causal acyclic directed mixed graph (ADMG) \citep{richardson2017nested}. ADMGs have directed edges $(\rightarrow)$ which represent direct causal influence (just like in DAGs), but also have bidirected edges $(\leftrightarrow)$ which represent the existence of an unobserved confounder between the two endpoint variables.

We require one technical definition: we will define the Markov blanket of $W$ in an ADMG to be $mb(W) = pa(dis(W)) \cup (dis(W) \setminus \{W\})$, where $dis$ refers to the \emph{district} of a variable and is the set of variables reachable through entirely bidirected paths.

\begin{proposition}
Suppose the data $(X,Y)$ were generated by a structural causal model (SCM), respecting a causal ADMG $\mathcal{G}$. Then, for a single variable $W$ and set $Z=mb_{\mathcal{G}}(W)$, a policy shift in $P(W \mid Z)$ can be expressed as a policy intervention on the mechanism generating $W$ which changes $P(W \mid mb_{\mathcal{G}}(W))$.
\end{proposition}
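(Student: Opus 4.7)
My plan is to parallel the proof of Proposition 1, with two substitutions: (i) invoke the ordered local Markov property for ADMGs (Richardson, 2003) in place of the standard local Markov property for DAGs, and (ii) work with the district-indexed kernel $P(W \mid mb_{\mathcal{G}}(W))$ in place of the parental conditional $P(W \mid pa_{\mathcal{G}}(W))$. In the SCM underlying $\mathcal{G}$, $W$ is generated by $W = f_w(pa_{\mathcal{G}}(W), \varepsilon_w)$, where $\varepsilon_w$ is independent of exogenous noises driving variables outside $dis_{\mathcal{G}}(W)$ but may be correlated with the noises driving the other variables in its district. A policy intervention replaces $f_w$ by some $g_w$, leaving every other structural equation and the joint distribution of the exogenous noises unchanged. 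The goal is to show (a) such an intervention alters \emph{only} the observed conditional $P(W \mid mb_{\mathcal{G}}(W))$, and (b) every target $Q(W \mid mb_{\mathcal{G}}(W))$ compatible with $\mathcal{G}$ is realizable by some choice of $g_w$.

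For (a), I would first apply the ordered local Markov property: in any topological order of $\mathcal{G}$, $W$ is conditionally independent of its non-descendants outside $mb_{\mathcal{G}}(W)$ given $mb_{\mathcal{G}}(W)$. This yields $P(W \mid S) = P(W \mid mb_{\mathcal{G}}(W))$ for any set $S$ of non-descendants containing $mb_{\mathcal{G}}(W)$, and the identity persists post-intervention since the intervention modifies only $f_w$. Second, I would appeal to the nested Markov factorization of ADMGs (Richardson et al., 2017), under which the observed joint factorizes into one kernel per district, and the kernel attached to $dis_{\mathcal{G}}(W)$ admits a further factorization in which $P(W \mid mb_{\mathcal{G}}(W))$ appears as a single factor. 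Since $g_w$ differs from $f_w$ only in how $W$ is generated from $(pa_{\mathcal{G}}(W),\varepsilon_w)$, every other kernel, and every other factor of the kernel for $dis_{\mathcal{G}}(W)$, is unchanged. This exactly matches the factorization $P(V \mid W, Z)\,P(Z)$ left invariant in Section~\ref{sec:defining_shifts} when $Z = mb_{\mathcal{G}}(W)$.

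For (b), the construction is direct: given any target $Q(W \mid mb_{\mathcal{G}}(W))$, define $g_w$ via the inverse CDF of the induced $Q(W \mid pa_{\mathcal{G}}(W),\varepsilon_w)$ applied to a uniform reparameterization of $\varepsilon_w$, so that the resulting observed conditional matches $Q$. \textbf{The main obstacle} is making step (a) rigorous in the presence of unobserved confounders: unlike in a DAG, $P(W \mid mb_{\mathcal{G}}(W))$ is not itself a factor of the usual latent-projection factorization, so one must carefully identify it with the district-kernel for $W$ in the nested Markov model and then verify that interventions on $f_w$ correspond bijectively to changes in that kernel alone. I expect most of the work to lie in this identification; once it is in hand, the rest of the argument is a clean mirror of the DAG proof.
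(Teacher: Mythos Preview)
Your core idea---invoke the ordered local Markov property for ADMGs in place of the DAG local Markov property---is exactly what the paper does. The paper's proof is a single sentence: ``The proof follows just as before, noting that the local Markov property in ADMGs states that a variable $V$ is independent of all variables preceding $V$ in a topological order conditional on $mb(V)$'' (citing \citet[Section 2.8.2]{richardson2017nested}). That is the entire argument.

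Everything beyond that in your proposal---the nested Markov factorization to isolate $P(W\mid mb_{\mathcal{G}}(W))$ as a single kernel, the explicit inverse-CDF construction for realizability of an arbitrary target $Q$, and the careful treatment of correlated exogenous noises within a district---is additional rigor that the paper simply does not attempt. The ``main obstacle'' you flag (that $P(W\mid mb_{\mathcal{G}}(W))$ is not itself a factor of the na\"ive latent-variable factorization, so one must identify it with the appropriate district kernel and check that modifying $f_w$ alone perturbs only that kernel) is a genuine subtlety, and your instinct that it requires the nested Markov machinery is correct. The paper treats the proposition at the level of a remark and leaves these points implicit; your version would be a more complete proof of a somewhat stronger statement. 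If your goal is to match the paper, the one-line substitution suffices; if your goal is a self-contained argument, your outline is sound but heavier than what is strictly needed here.
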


\begin{proof}
The proof follows just as before, noting that the local Markov property in ADMGs states that a variable $V$ is independent of all variables preceding $V$ in a topological order conditional on $mb(V)$ \citep[Section 2.8.2]{richardson2017nested}.
\end{proof}

\section{Connection to distributionally robust optimization}
\label{sec:dro}

In distributionally robust optimization (DRO), a model is trained to minimize loss on the worst-case distribution from within an uncertainty set of distributions. Here we show how Equation 1 of the main paper can equivalently be thought of a the expected loss under the worst-case distribution from within just such an uncertainty set. Formally, we define an uncertainty set or ``ball'' $\mathcal{P}_{\rho,Z,\mathcal{W}}$ of possible shifted distributions using a statistical divergence $D(\cdot\parallel\cdot)$ and radius $\rho$:  
\begin{align}
    \mathcal{P}_{\rho,Z,\mathcal{W}} = \{Q:D( Q(W) \parallel P(W \mid Z)) \leq \rho\}.
\end{align}

Note that this uncertainty set depends explicitly on the value of $Z$. We are interested in the expected loss of the model when $P(W \mid Z)$ is replaced by $Q(W \mid Z) = Q_Z \in \mathcal{P}_{\rho,Z,\mathcal{W}}$ that maximizes expected loss, written
\begin{align}
    \label{eq:r_rho}
    R_\rho(\mathcal{M};P) = \mathbb{E}_{P}\left[\sup_{Q_Z\in\mathcal{P}_{\rho,Z,\mathcal{W}}} \mathbb{E}_{Q_Z}
    [\mu_0(W,Z) \mid Z]\right],
\end{align}
where, as in the main paper, $\mu_0(W,Z) = \mathbb{E}_{P}[\ell(Y,\mathcal{M}(X)) \mid W,Z]$ is the conditional expected loss given $W$ and $Z$. By construction of $\mathcal{P}_{\rho,Z,\mathcal{W}}$, $Q$ will never place positive weight on regions where $P$ does not. Calculating $R_\rho$ requires calculating the expected loss under various distributions $Q$. As in previous work on DRO and domain adaptation, we rely on \emph{sample reweighting}, which allows us to estimate expectations under $Q$ using samples from $P$. This is done by reweighting samples from $P$ by the likelihood ratio $q/p$. Specifically, the expected loss under $Q$ can be rewritten as
\begin{align}
	\mathbb{E}_{Q}[\mu_0(W,Z) \mid Z] = \mathbb{E}_P\left[\frac{q(W \mid Z)}{p(W \mid Z)}\mu_0(W,Z) ~\Big\vert~ Z\right].
\end{align}
If $Q$ is quite different from $P$, the variance of importance sampling can be high. This variance is naturally governed by $\rho$, which controls how different $Q$ can be from $P$. In order to consider environments that look \textit{very} different from $P$, a large test dataset may be needed. 

To see how this formulation connects to Equation 1 of the main paper, we will make use of the following lemma which follows directly from Theorem 6 in \citet{van2014renyi}

\begin{lemma}
    For probability measures $P$ and $Q$ defined with respect to the same base measure $\mu$ and with corresponding density functions $p$ and $q$, $\sup_{A \in \mathcal{F}} \frac{Q(A)}{P(A)} \leq c$ if and only if $\frac{q}{p} \leq c$ almost everywhere.
\end{lemma}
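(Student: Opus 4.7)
The plan is to prove the equivalence directly from the definitions by integrating the pointwise inequality $q \le cp$ over measurable sets, rather than invoking the cited \citet{van2014renyi} result as a black box. The statement is essentially a reformulation of the fact that the Radon--Nikodym derivative is bounded if and only if the set-wise ratio is.

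For the $(\Leftarrow)$ direction, I would assume $q(x) \le c\,p(x)$ for $\mu$-almost every $x$, interpreted so that $p(x) = 0$ forces $q(x) = 0$ on the exceptional null set. For every $A \in \mathcal{F}$, monotonicity of the Lebesgue integral then gives
\begin{align*}
Q(A) \;=\; \int_A q\, d\mu \;\le\; c \int_A p\, d\mu \;=\; c\, P(A).
\end{align*}
Dividing by $P(A)$ on sets with $P(A) > 0$ and taking the supremum yields the bound; on sets with $P(A) = 0$, the assumption forces $Q(A) = 0$ as well, so such sets do not affect the supremum (once we either restrict the sup to $\{A : P(A) > 0\}$ or fix the convention $0/0 = 0$).

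For the $(\Rightarrow)$ direction, I would argue by contradiction. Suppose $\sup_{A} Q(A)/P(A) \le c$ but the set $B = \{x : q(x) > c\, p(x)\}$ has strictly positive $\mu$-measure. Since $q - cp$ is strictly positive on $B$, monotonicity gives $Q(B) - c\, P(B) = \int_B (q - cp)\, d\mu > 0$, so $Q(B) > c\, P(B)$. If $P(B) > 0$ this directly contradicts the supremum bound; if $P(B) = 0$, then $Q(B) > 0 = P(B)$, so the ratio is $+\infty$, again contradicting the bound (equivalently, the finite bound on the sup forces $Q \ll P$, so this case is ruled out).

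The main obstacle is really just the bookkeeping around $P$-null sets and the interpretation of the ratio $Q(A)/P(A)$ when $P(A) = 0$. This is handled uniformly either by restricting the supremum to $\{A : P(A) > 0\}$ or by noting that a finite bound on the sup automatically enforces absolute continuity $Q \ll P$. Beyond that technicality, each direction collapses to a single application of monotonicity of the integral applied to the candidate witness set $B = \{q > cp\}$.
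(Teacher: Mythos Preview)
Your proof is correct. Both directions are handled properly: the forward direction is the immediate integration argument, and the contrapositive direction correctly identifies the witness set $B = \{q > cp\}$ and uses the fact that a strictly positive measurable function on a set of positive measure has strictly positive integral. Your bookkeeping around the $P(A) = 0$ case and the implicit absolute-continuity requirement is also right.

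Compared to the paper: the paper does not actually prove this lemma but simply cites it as a direct consequence of Theorem~6 in \citet{van2014renyi}, which characterizes the R\'enyi divergence of order $\infty$ as $\log \operatorname{ess\,sup}_P q/p = \log \sup_A Q(A)/P(A)$. Your argument instead unpacks exactly the measure-theoretic content of that identity from first principles, which is more self-contained and avoids relying on the external reference. The trade-off is that the citation implicitly carries the careful handling of the null-set conventions (what $q/p$ means when $p=0$, which ``almost everywhere'' is meant), whereas you have to state and manage those conventions yourself---but you do so correctly.
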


Using this lemma, we can rewrite Equation 1 as 

\begin{align}
    &\sup_{Q} \,\,  \mathbb{E}_P
    \left[\mathbb{E}_Q[\mu_0(X)\mid Z]\right]\\\nonumber
    &\text{s.t.} \quad \,\,\frac{q(W \mid Z)}{p(W \mid Z)} \leq \exp(\rho)\quad a.e.
\end{align}

This can, in turn, be rewritten as
\begin{align}
\label{eq:lr_primal}
    &\sup_{Q} \,\,  \mathbb{E}_P
    \left[\frac{q(W \mid Z)}{p(W \mid Z)}\mu_0(X)\right]\\\nonumber
    &\text{s.t.} \quad \,\,\frac{q(W \mid Z)}{p(W \mid Z)} \leq \exp(\rho)\quad a.e.
\end{align}
Define $\exp(\rho) h(w,z) = \frac{q(w \mid z)}{p(w \mid z)}$ and $\exp(\rho) = \frac{1}{1-\alpha}$. Then, the constraint in Equation \ref{eq:lr_primal}, combined with the fact that $p$ and $q$ are both densities and thus are bounded below by zero, translates to $h:\mathcal{X} \to [0,1]$. Further, the constraint that $q$ must integrate to one (or, equivalently, $\mathbb{E}_P[q(W \mid Z)/p(W \mid Z) \mid Z] = 1$ almost everywhere) translates to the constraint $\mathbb{E}_P[h(W,Z) \mid Z] = 1 - \alpha$ almost everywhere. Finally, we can rewrite this optimization problem as
\begin{align}
    \sup_{h:\mathcal{W}\times\mathcal{Z} \to [0,1]} \,\,  &\frac{1}{1-\alpha}\mathbb{E}_P
    \left[h(W,Z)\mu_0(W,Z)\right]\\\nonumber
    \text{s.t.} \quad \,\,&\mathbb{E}_P[h(W,Z)\mid Z] = 1 - \alpha \quad a.e.
\end{align}

\section{Derivation of Equation 3 of the main paper}
\label{app:eq_3_derivation}

To derive Equation 3 of the main paper, we will take the Lagrange dual of Equation 1. Recall that Equation 1 is given by

\begin{align}
    R_{\alpha,0} = &\sup_{h:\mathcal{W} \times \mathcal{Z}\rightarrow [0,1]} \,\, \frac{1}{1-\alpha} \mathbb{E}_P\left[h(W,Z)\mu_0(W,Z)\right]\\
    &\text{s.t.} \quad \,\,\mathbb{E}_P[h(W,Z)\mid Z] = 1-\alpha\quad a.e.
\end{align}

Then the Lagrangian is given by

\begin{align}
    \mathcal{L}(h,\nu) = \frac{1}{1-\alpha} \mathbb{E}_P\left[h(W,Z)\mu_0(W,Z)\right] + \int \nu(z) (1-\alpha - \mathbb{E}_P[h(W,Z)\mid Z=z]) ~dz,
\end{align}

where $\nu:\mathcal{Z} \to \mathbb{R}$ is the function of Lagrange multipliers. By recalling that $p$ is the density function associated with $P$ and by defining $\eta(z) = \nu(z)\frac{1-\alpha}{p(z)}$ we get

\begin{align}
    \mathcal{L}(h,\eta) &= \frac{1}{1-\alpha} \mathbb{E}_P\left[h(W,Z)\mu_0(W,Z)\right] + \int_z p(z)\eta(z) (1 - \frac{1}{1-\alpha}\mathbb{E}_P[h(W,Z)\mid Z=z] ~dz\\
    &= \frac{1}{1-\alpha} \mathbb{E}_P\left[h(W,Z)\mu_0(W,Z)\right] + \mathbb{E}_P[\eta(Z)] - \frac{1}{1-\alpha}\mathbb{E}_P[h(W,Z)\eta(Z)]\\
    &= \frac{1}{1-\alpha} \mathbb{E}_P\left[h(W,Z)(\mu_0(W,Z)-\eta(Z))\right] + \mathbb{E}_P[\eta(Z)].
\end{align}

The Lagrange dual is then given by $\min_{\eta:\mathcal{Z}\to\mathbb{R}} \max_{h:\mathcal{Z}\times\mathcal{W}\to [0,1]} \mathcal{L}(h,\eta)$. To maximize $h$ out of this equation, observe that the optimal solution occurs when $h$ equals one whenever $\mu_0 - \eta$ is positive and zero when $\mu_0 - \eta$ is negative or $h(z,w) = \mathbb{I}(\mu_0(w,z) > \eta(z))$. Then observe that $\mathbb{I}(\mu_0(w,z) > \eta(z))(\mu_0(w,z)-\eta(z))$ can be rewritten as $(\mu_0(w,z)-\eta(z))_+$ where $(x)_+ = \max \{x,0\}$. Finally, because the original problem is a linear program, strong duality holds and we arrive at our final expression 

\begin{align}
    R_{\alpha,0} &= \min_{\eta:\mathcal{Z}\to\mathbb{R}} \frac{1}{1-\alpha} \mathbb{E}_P\left[(\mu_0(W,Z)-\eta(Z))_+\right] + \mathbb{E}_P[\eta(Z)]
\end{align}

\section{Proof of Theorem 1}
In this section, we provide a proof of Theorem \ref{thm:conv} in the main paper. This proof draws heavily on results from \citet{chernozhukov2018double} and \citet{jeong2020robust}. For notational simplicity and consistency between our work and theirs, let $\theta_0 = R_{\alpha,0}$ be the target parameter. Algorithm \ref{alg:alg1} in the main paper is an instance of the DML2 algorithm from \citet{chernozhukov2018double} where the score function $\psi$ is given by
\begin{align}
    \psi(O;\theta,\gamma) = \psi^b(O;\gamma) - \theta,
\end{align}
where
 \begin{align}
	\psi^b(O;\gamma) = \frac{1}{\alpha} (\mu(W,Z) - \eta(Z))_+ + \eta(Z) + \frac{1}{\alpha} h(W,Z) (\ell(Y,\mathcal{M}(X)) - \mu(W,Z)),
\end{align}
and where $O = (W,Z,V)$, $\gamma = (\mu,\eta)$, and $h = [\mu > \eta]$. In this proof, we will show that Assumptions 3.1 and 3.2 of \citet{chernozhukov2018double} are nearly satisfied and will fill in the gaps where they are not. We restate these assumptions here with some of the notation changed to match the notation used in this paper. First, some definitions: Let $c_0 > 0$, $c_1 > 0$, $s > 0$, $q > 2$ be some finite constants such that $c_0 \leq c$ and let $\{\delta_N\}_{N \geq 1}$ and $\{\Delta_N\}_{N \geq 1}$ be some positive constants converging to zero such that $\delta_N \geq N^{-1/2}$. Also, let $K \geq 2$ be some fixed integer, and let $\{\mathcal{P}_N\}_{N\geq 1}$ be some sequence of sets of probability distributions $P$ of $O$ on $\mathcal{O} = \mathcal{W}\times\mathcal{Z}\times\mathcal{V}$. Let $T$ be a convex subset of some normed vector space repressenting the set of possible nuissance parameters (i.e., $\gamma \in T$). Finally, let $a \lesssim b$ denote that there exists a constant $C$ such that $a \leq Cb$.
\begin{assumption} \label{ass:chern_31}
	(Assumption 3.1 from \citet{chernozhukov2018double}) For all $N \geq 3$ and $P \in \mathcal{P}_N$, the following conditions hold. (a) The true parameter value $\theta_0$ obeys $\mathbb{E}_{P}[\psi(O;\theta_0,\gamma_0)] = 0$. (b) The score $\psi$ can be written as $\psi(O;\theta,\gamma) = \psi^a(O;\gamma)\theta + \psi^b(O;\gamma)$. (c) The map $\gamma \mapsto \mathbb{E}_P[\psi(O;\theta,\gamma)]$ is twice continuously Gateaux-differentiable on $T$. (d) The score $\psi$ obeys Neyman orthogonality. (e) The singular values of the matrix $J_0 = \mathbb{E}_P[\psi^a(O;\gamma_0)]$ are between $c_0$ and $c_1$.
\end{assumption}

\begin{assumption} \label{ass:chern_32}
	(Assumption 3.2 from \citet{chernozhukov2018double}) For all $N \geq 3$ and $P \in \mathcal{P}_N$, the following conditions hold. (a) Given a random subset $I$ of $[N]$ of size $n = N/K$, the nuisance paramter estimator $\hat{\gamma} = \hat{\gamma}((O_i)_{i \in I^c})$ belongs to the realization set $\mathcal{T}_N$ with probability at least $1 - \Delta_N$, where $\mathcal{T}_N$ contains $\gamma_0$ and is constrained by the next conditions. (b) The following moment conditions hold:
	\begin{align*}
		m_N &= \sup_{\gamma \in \mathcal{T}_N} (\mathbb{E}_P[\|\psi(O;\theta_0,\gamma)\|^q])^{1/q} \leq c_1\\
		m_N' &= \sup_{\gamma \in \mathcal{T}_N} (\mathbb{E}_P[\|\psi^a(O;\gamma)\|^q])^{1/q} \leq c_1.
	\end{align*}
	(c) The following conditions on the statiestical rates $r_N$, $r_N'$, and $\lambda_N'$ hold:
	\begin{align*}
		r_N &= \sup_{\gamma \in \mathcal{T}_N} \|\mathbb{E}_P[\psi^a(O;\gamma)] - \mathbb{E}_P[\psi^a(O;\gamma_0)]\| \leq \delta_N,\\
		r'_N &= \sup_{\gamma \in \mathcal{T}_N} (\mathbb{E}_P[\|\psi(O;\theta, \gamma) - \mathbb{E}_P[\psi(O;\theta_0,\gamma_0)\|^2])^{1/2} \leq \delta_N,\\
		\lambda'_N &= \sup_{r\in(0,1),\gamma \in \mathcal{T}_N} \| \partial^2_r \mathbb{E}_P[\psi(O;\theta_0, \gamma_0 + r(\gamma - \gamma_0))]\| \leq \delta_N/\sqrt{N}.
	\end{align*}
	(d) The variance of the score $\psi$ is non-degenerate: All eigenvalues of the matix $\mathbb{E}_P[\psi(O;\theta_0,\gamma_0)\psi(O;\theta_0,\gamma_0)']$ are bounded from below by $c_0$.
\end{assumption}

Here, we will show that all of these conditions are satisfied except for Assumption \ref{ass:chern_31} (c) and, by extension the bound on $\lambda'_N$ in Assumption \ref{ass:chern_32}. These two conditions are used in \citet{chernozhukov2018double} to prove that, for any sequence $\{P_N\}_{N \geq 1}$ such that $P_N \in \mathcal{P}_N$, the following holds for all $P_N \in \mathcal{P}_N$
\begin{align}
	\|R_{N,2}\| = \mathcal{O}_{P_N}(\delta_N/\sqrt{N}),
\end{align}
where
\begin{align}
	R_{N,2} = \frac{1}{K} \sum_k \mathbb{E}_{n,k}[\psi(O;\theta_0,\hat{\gamma}_k)] - \frac{1}{N}\sum_{i=1}^N \psi(W_i;\theta_0,\gamma_0),
\end{align}
and where $\mathbb{E}_{n,k}[\cdot] = \frac{1}{n}\sum_{i \in I_k} (\cdot)$ is the empirical expectation w.r.t. the $k$'th cross-validation fold. We will prove this using other means. First, we establish that all other conditions in Assumptions \ref{ass:chern_31} and \ref{ass:chern_32} hold for all $P_N \in \mathcal{P}_N$. For notational simplicity, we will drop the dependence of $\ell$, $\gamma$, and $h$ on $O$ throughout. Additionally, denote by $\mathcal{E}_N$ the event that $\gamma_k \in \mathcal{T}_N$.

\paragraph{Proof of Assumption \ref{ass:chern_31} (a)} This holds trivially via the definitions of $\theta_0$ and $\mu_0$.

\begin{align}
	\mathbb{E}_{P_N}[\psi(O;\theta_0,\gamma_0)] &= \mathbb{E}_{P_N}\left[\frac{1}{\alpha} (\mu_0 - \eta_0)_+ + \eta_0 + \frac{1}{\alpha} h_0 (\ell - \mu_0) - \frac{1}{\alpha} (\mu_0 - \eta_0)_+ - \eta_0\right]\\
	&= \mathbb{E}_{P_N}\left[\frac{1}{\alpha} h_0 (\ell - \mu_0)\right] = \mathbb{E}_{P_N}\left[\frac{1}{\alpha} h_0 (\mu_0 - \mu_0)\right] = 0
\end{align}

\paragraph{Proof of Assumption \ref{ass:chern_31} (b)} This holds trivially with $\psi^a = -1$.

\paragraph{Proof of Assumption \ref{ass:chern_31} (d)} To show Neyman orthogonality of $\psi$, we must show that, for $P_N \in \mathcal{P}_N$, $T$ the set of possible nuissance parameter values, and $\tilde{T} = \{\gamma - \gamma_0:\gamma \in T\}$, the Gateaux derivative map $D_r: \tilde{T} \to \mathbb{R}$ exists for all $r \in [0,1)$ where
\begin{align*}
	D_r[\gamma-\gamma_0] = \partial_r \left\{ \mathbb{E}_{P_N} \left[\psi(O;\theta_0, \gamma_0 + r(\gamma - \gamma_0)) \right] \right\},\,\, \gamma \in T,
\end{align*}
and that $D_r[\gamma-\gamma_0]$ vanishes for $r = 0$. For notational simplicity, let $\mu_r  = \mu_0 - r(\mu - \mu_0)$, with analgous definitions for $\eta_r$ and $h_r$. Then, using Danskin's theorem, $D_r[\gamma-\gamma_0]$ exists for $r \in [0,1)$ and is given by
\begin{align*}
	D_r[\gamma-\gamma_0] = &\mathbb{E}_{P_N}\left[ \frac{1}{\alpha} [\mu_r \geq \eta_r]((\mu - \mu_0) - (\eta - \eta_0)) + (\eta - \eta_0)\right.\\ &\left.+ \frac{1}{\alpha}(h - h_0)(l - \mu_r) - \frac{1}{\alpha}h_r(\mu - \mu_0) \right]
\end{align*}
Finally, we have
\begin{align*}
	\left. D_r[\gamma-\gamma_0] \right|_{r=0} &= \mathbb{E}_{P_N}\left[ \frac{1}{\alpha} [\mu_0 \geq \eta_0]((\mu - \mu_0) - (\eta - \eta_0)) + (\eta - \eta_0) + \frac{1}{\alpha}(h - h_0)(l - \mu_0) - \frac{1}{\alpha}(h_0)(\mu - \mu_0) \right]\\
	&= \mathbb{E}_{P_N}[ \eta - \eta_0] - \frac{1}{\alpha} \mathbb{E}_{P_N}[h_0(\eta - \eta_0)]\\
	&= \mathbb{E}_{P_N}[ \eta - \eta_0] - \frac{1}{\alpha} \mathbb{E}_{P_N}[\mathbb{E}_{P_N}[h_0 \mid Z] (\eta - \eta_0)]\\
	&= \mathbb{E}_{P_N}[ \eta - \eta_0] - \frac{1}{\alpha} \mathbb{E}_{P_N}[\alpha (\eta - \eta_0)] = 0
\end{align*}

The second line follows from the definitions of $h_0 = [\mu_0 \geq \eta_0]$ and $\mu_0 = \mathbb{E}[\ell \mid W,Z]$. The final line follows from the constraint that $\mathbb{E}_{P_N}[h_0 \mid Z] = \alpha$ almost everywhere.

\paragraph{Proof of Assumption \ref{ass:chern_31} (e)} This hold trivially since $J_0 = \mathbb{E}_{P_N}[\psi^a(O;\gamma_0)] = -1$.

\paragraph{Proof of Assumption \ref{ass:chern_32} (a)} This holds by construction of $\mathcal{T}_N$ and Assumption \ref{ass:conv_rate}.

\paragraph{Proof of Assumption \ref{ass:chern_32} (b)} The bound on $m'_N$ holds trivially as $\psi^a(O;\gamma) = -1$. To bound $m_N$ on the event $\mathcal{E}_N$, we begin by decomposing it using the triangle inequality as
\begin{align}
    \|\psi(O;\theta_0,\gamma)\|_{P_N,q} &= \left\|\frac{1}{\alpha}(\mu - \eta)_+ + \eta + \frac{1}{\alpha}h(\ell - \mu)\right\|_{P_N,q}\\
    &\leq \frac{1}{\alpha}\|(\mu - \eta)_+ + \alpha\eta\|_{P_N,q} + \frac{1}{\alpha}\|h(\ell - \mu)\|_{P_N,q}
\end{align}

Since $0 \leq h \leq 1$, and by the triangle inequality and Assumption \ref{ass:conv_rate}, we have 
\begin{align}
\|h(\ell - \mu)\|_{P_N,q} &\leq \|\ell - \mu\|_{P_N,q}\\
&=  \|\ell - \mu_0 + \mu_0 - \mu\|_{P_N,q}\\
&\leq \|\ell - \mu_0\|_{P_N,q} + \|\mu_0 - \mu\|_{P_N,q}\\
&\leq 2C,
\end{align}
where the fourth line follows from Assumption \ref{ass:conv_rate} (a). Next, we can bound $\|(\mu - \eta)_+ + \alpha\eta\|_{P_N,q}$ as
\begin{align}
	    \|(\mu - \eta)_+ + \alpha\eta\|_{P_N,q} &\leq \|(\mu - \eta)_+\|_{P_N,q} + \|\alpha\eta\|_{P_N,q}\\
		&\leq \|\mu - \eta\|_{P_N,q} + \|\alpha\eta\|_{P_N,q}\\
		&\leq \|\mu_0\|_{P_N,q} + \|\mu_0 - \mu\|_{P_N,q} + (1+\alpha)(\|\eta_0\|_{P_N,q} + \|\eta_0 - \eta\|_{P_N,q})\\
		&\leq (4 + 2\alpha)C
\end{align}
where the fourth line follows from Jensen's inequality and Assumption \ref{ass:conv_rate} (a). Thus, $\|\psi(O;\theta_0,\gamma)\|_{P_N,q} < \infty$ and Assumption \ref{ass:chern_32} (b) holds.

\paragraph{Proof of Assumption \ref{ass:chern_32} (c)} The bound on $r_N$ is trivially satisfied. Further,
\begin{align}
	\|\psi(O;\theta_0,\gamma) - \psi(O;\theta_0,\gamma_0)\|_{P_N,2} &= \frac{1}{\alpha}\|\alpha(\eta - \eta_0) + \ell(h - h_0) + h_0\eta_0 - h\eta\|_{P_N,2}\\
	&\leq \frac{1}{\alpha}\left(\alpha\|\eta - \eta_0\|_{P_N,2} + \|\ell(h - h_0)\|_{P_N,2} + \|h_0\eta_0 - h\eta\|_{P_N,2}\right)\\
	&\leq \frac{1}{\alpha}(\alpha \delta_N + C\delta_N + \|h_0\eta_0 - h\eta\|_{P_N,2})
\end{align}
where the first line follows from the definition of $\psi$ and $h$, the second line follows from the triangle inequality, and the third line follows from the Assumption \ref{ass:conv_rate}. Then, to bound $\|h_0\eta_0 - h\eta\|_{P_N,2}$, first observe that $\|h_0 - h\|_{P_N,2} = \|[h_0 = 1,h=0] + [h_0 = 0,h=1]\|_{P_N,2}$ where $[\cdot]$ is the Iverson bracket. Then, we have
\begin{align}
	\|h_0\eta_0 - h\eta\|_{P_N,2} &= \|[h_0=1,h=0]\eta_0 - [h_0=0,h=1]\eta + [h_0 = 1,h = 0](\eta_0 - \eta)\|_{P_N,2}\\
	&\leq \|([h_0=1,h=0] - [h_0=0,h=1])\eta_0 - [h_0=0,h=1](\eta - \eta_0) + [h_0 = 1,h = 0](\eta_0 - \eta)\|_{P_N,2}\\
	&\leq \|([h_0=1,h=0] - [h_0=0,h=1])\eta_0\|_{P_N,2} + \| [h_0=0,h=1](\eta - \eta_0)\|_{P_N,2}\\ &\quad+ \|[h_0 = 1,h = 0](\eta_0 - \eta)\|_{P_N,2}\\
	&\leq C\|h_0 - h\|_{P_N,2} + C \delta_N + \delta_N\\
	&\lesssim \delta_N,
\end{align}
where the third line follows from the triangle inequality and the fourth line follows from Lemma 14 of \cite{jeong2020robust} and Assumption \ref{ass:conv_rate} (d) - (f). Finally, we have 
\begin{align}
	\|\psi(O;\theta_0,\gamma) - \psi(O;\theta_0,\gamma_0)\|_{P_N,2} \leq \frac{1}{\alpha}(\alpha + 3C + 1)\delta_N
\end{align}
and thus the bound on $r'_N$. As discussed above the bound on $\lambda'_N = \sup_{r\in(0,1),\gamma \in \mathcal{T}_N} \| \partial^2_r \mathbb{E}_{P_N}[\psi(O;\theta_0, \gamma_0 + r(\gamma - \gamma_0))]\|$ does \textit{not} hold because $\psi$ is not twice differentiable.

\paragraph{Proof of Equation A.6 from \cite{chernozhukov2018double}} We have now shown that all parts of Assumptions \ref{ass:chern_31} and \ref{ass:chern_32} hold except for assumptions involving the second derivative of $\psi$. The assumptions regarding the second derivative of $\psi$ are used by \cite{chernozhukov2018double} to show that, for all $P_N \in \mathcal{P}_N$
\begin{align}
	R_{N,2} = \frac{1}{K} \sum_k \mathbb{E}_{n,k}[\psi(O;\theta_0,\hat{\gamma}_k)] - \frac{1}{N}\sum_{i=1}^N \psi(O_i;\theta_0,\gamma_0) = \mathcal{O}_{P_N}(\delta_N/\sqrt{N}).
\end{align}

We will show this using similar arguments to those in Step 3 of the proof of Theorem 3.1 in \cite{chernozhukov2018double}, but without relying on the second derivative of $\psi$. As in \citet{chernozhukov2018double}, because the number of cross-validation folds $K$ is a fixed integer, we need only show that
\begin{align}
    \mathbb{E}_{n,k}[\psi(O;\theta_0,\hat{\gamma}_k)] - \frac{1}{n}\sum_{i=1}^n \psi(O_i;\theta_0,\gamma_0) = \mathcal{O}_{P_N}(\delta_N/\sqrt{N})
\end{align}
where $n = N/K$. Following \cite{chernozhukov2018double} this quantity can be bounded as
\begin{align}
    \left\|\mathbb{E}_{n,k}[\psi(O;\theta_0,\hat{\gamma}_k)] - \frac{1}{n}\sum_{i=1}^n \psi(O_i;\theta_0,\gamma_0)\right\| \leq \frac{\mathcal{I}_{3,k} + \mathcal{I}_{4,k}}{\sqrt{n}},
\end{align}
where
\begin{align}
    \mathcal{I}_{3,k} &= \|\mathbb{G}_{n,k}[\psi(O;\theta_0,\hat{\gamma}_k)] - \mathbb{G}_{n,k}[\psi(O;\theta_0,\gamma_0)]\|\\
    \mathcal{I}_{4,k} &= \sqrt{n}\|\mathbb{E}_{P_N}[\psi(O;\theta_0,\hat{\gamma}_k)] \mid (O_i)_{i\in I_k} - \mathbb{E}_{P_N}[\psi(O;\theta_0,\gamma_0)]\|,
\end{align}
and where
\begin{align}
    \mathbb{G}_{n,k}[\phi(O)] = \frac{1}{\sqrt{n}}\sum_{i\in I_k}\left(\phi(O_i) - \int \phi(w) dP_N\right).
\end{align}

\citet{chernozhukov2018double} showed that $\mathcal{I}_{3,k} = \mathcal{O}_{P_N}(r'_N)$ (using only assumptions satisfied by Assumption \ref{ass:conv_rate}) and thus, what remains to be shown is that $\mathcal{I}_{4,k} \leq \delta_N/\sqrt{N}$ which we do drawing on proofs in \cite{jeong2020robust}. Define
\begin{align}
    f_k(r) = \mathbb{E}_{P_N}[\psi(O;\theta_0,\gamma_0 + r(\hat{\gamma}_k - \gamma_0)) \mid (O_i)_{i\in I^c_k}] - \mathbb{E}_{P_N}[\psi(O;\theta_0,\gamma_0)].
\end{align}

Note that $|f_k(1)|$ is the quantity that we want to bound and $f_k(0) = 0$. Then, using the mean value theorem, for some $r^* \in (0,1)$, we have $|f_k(1)| = |f_k(0) + f'_k(r^*)|  = |f'_k(r^*)| \leq \sup_r |f'_k(r)|$. Define $\hat{\mu}_r = \mu_0 + r(\hat{\mu}_k - \mu_0)$ with analogous definitions for $\hat{\eta}_r$ and $\hat{h}_r$. Then, for arbitrary $r \in (0,1)$, we bound $|f'_k(r)|$ as
\begin{align}
    |f'_k(r)| &= |\partial_r \mathbb{E}_{P_N}[\psi(O;\theta_0,\gamma_0 + r(\hat{\gamma}_k - \gamma_0)) \mid (O_i)_{i\in I^c_k}]|\\
    &\leq \frac{1}{\alpha}\left(\left|\mathbb{E}_{P_N}[([\hat{\mu}_r > \hat{\eta}_r] - h_0)(\hat{\mu}_k - \mu_0)]\right| + \left|\mathbb{E}_{P_N}[(\alpha - [\hat{\mu}_r > \hat{\eta}_r])(\hat{\eta}_k - \eta_0)]\right| + 2\left|\mathbb{E}_{P_N}[(\hat{h}_k - h_0)(\hat{\mu}_k - \mu_0)]\right| \right)\\
    &\leq \frac{1}{\alpha}\left(\left|\mathbb{E}_{P_N}[(\alpha - [\hat{\mu}_r > \hat{\eta}_r])(\hat{\eta}_k - \eta_0)]\right| + 3\left|\mathbb{E}_{P_N}[(\hat{h}_k - h_0)(\hat{\mu}_k - \mu_0)]\right| \right)\\
    &\leq \frac{1}{\alpha}\left(\|\hat{h}_k - h_0\|_{P_N,1}\|\hat{\eta}_k - \eta_0\|_{P_N,\infty} + 3\|\hat{h}_k - h_0\|_{P_N,1}\|\hat{\mu}_k - \mu_0\|_{P_N,\infty}\right)\\
    &\leq \frac{1}{\alpha} \|\hat{h}_k - h_0\|_{P_N,1} \left( \|\hat{\eta}_k - \eta_0\|_{P_N,\infty} + 3\|\hat{\mu}_k - \mu_0\|_{P_N,\infty}\right)
\end{align}
where the second line follows from the triangle inequality, the third line follows from the obsrevation that $|[\hat{\mu}_r < \hat{\eta}_r] - h_0| \leq |\hat{h}_k - h_0|$, and the fourth line follows from this observation and applications of Jensen's inequality followed by H{\"o}lder's inequality. By Lemmas 13 and 14 of \citet{jeong2020robust} and Assumption \ref{ass:conv_rate} (f), we have
$\|\hat{\eta} - \eta_0\|_{P_N,\infty} = \mathcal{O}(\|\hat{\mu}_k - \mu_0\|_{P_N,\infty})$ and
$\|\hat{h}_k - h_0\|_{P_N,1} = \mathcal{O}(\delta_N N^{-1/6})$. Further, by Assumption \ref{ass:conv_rate} we have $\|\hat{\mu}_k - \mu_0\|_{P_N,\infty} =  \mathcal{O}(\delta_N N^{-1/3})$. Thus, we have $|f'_k(r)| = \mathcal{O}(\delta_N N^{-1/2})$ and our proof is concluded.


%
%
%
%
%

\section{Handling discrete W}
\label{app:disc_w}

\begin{algorithm}[h!]
    \SetAlgoLined
    \KwIn{Model $\mathcal{M}$, Dataset $\mathcal{D} = \{(w_i,z_i,v_i)\}_{i=1}^n$, noise parameter $\epsilon$, and $K$ cross-validation folds $I_k\subset \{1,\dots,n\}$ and $I^c_k = \{1,\dots,n\} \setminus I_k$}
    \For{$k=1,\dots,K$}{
        Estimate $\hat{\mu}_k \approx \mu_0$ using data in $I^c_k$\\
        Estimate $\hat{\eta}_k \approx \eta_0$ according to Eq. \ref{eq:eta0} using $\hat{\mu}_k + u_i$ and data in $I^c_k$\\
		\For{$i \in I_k$}{
			Let $\hat{\mu}_{i} = \hat{\mu}_k(w_i,z_i)$\\
			Let $\hat{\eta}_{i} = \hat{\eta}_k(z_i)$\\
			Let $\hat{h}_{i} = [\hat{\mu}_{i} + u_i > \hat{\eta}_{i}]$\\
		}
    }
	
    Let $\hat{R}_{\alpha,\epsilon} = \frac{1}{K} \sum_{k} \frac{1}{|I_k|} \sum_{i \in I_k} \frac{1}{1-\alpha}(\hat{\mu}_{i} + u_i - \hat{\eta}_{i})_+ + \hat{\eta}_{i}$\\
    \qquad\qquad\qquad\qquad$+ \frac{1}{1-\alpha}\hat{h}_{i}(\ell(y_i,\mathcal{M}(x_i)) - \hat{\mu}_{i})$\\
    \KwResult{$\hat{R}_\alpha$}
    \caption{\textsc{Discrete Worst-case Sampler}}
    \label{alg:alg2}
\end{algorithm}

When $W$ contains only discrete variables, Assumption \ref{ass:conv_rate} (f) no longer holds. In such cases, we can retain the desirable theoretical properties of Theorem \ref{thm:conv} at the cost of an arbitrarily small, user-controlled amount of bias, using a simple augmentation to the \textsc{Worst-case Sampler} in Algorithm \ref{alg:alg1} of the main paper. This augmentation works by adding a small amount of user-controlled noise to the $\mu_0$ thereby smoothing the conditional distribution of $\mu_0$ given $Z$. To derive this augmentation, first let $U \sim Unif(0,\epsilon)$ be a uniform random variable with support on $[0,\epsilon]$ such that $U \indep \{W,Z,V\}$. Then, we can choose $h$ to maximize the expected loss plus this extra noise term as

\begin{align}
    R_{\alpha,\epsilon,0} = &\sup_{h:[0,\epsilon]\times\mathcal{W} \times \mathcal{Z}\rightarrow [0,1]} \,\, \frac{1}{1-\alpha} \mathbb{E}_P\left[h(U,W,Z)\mu_{\epsilon,0}(U,W,Z)\right]\\
    &\text{s.t.} \quad \,\,\mathbb{E}_P[h(U,W,Z)\mid Z] = 1-\alpha\quad a.e.,
\end{align}

where $\mu_{\epsilon,0}(U,W,Z) = \mu_0(W,Z) + U$. The corresponding change to the estimation algorithm is shown in Algorithm \ref{alg:alg2}. This algorithm returns a consistent estimate for $R_{\alpha,\epsilon,0}$ as $\mu_{\epsilon,0}$ satisfies the conditions of Assumption \ref{ass:conv_rate} (f). In the following proposition, we show that the difference between $R_{\alpha,0}$ and $R_{\alpha,\epsilon,0}$ is bounded by $\epsilon$.

\begin{proposition}
$|R_{\alpha,\epsilon,0} - R_{\alpha,0}| \leq \epsilon$
\end{proposition}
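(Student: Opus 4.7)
The plan is to prove the two-sided bound $0 \le R_{\alpha,\epsilon,0} - R_{\alpha,0} \le \epsilon$ by comparing feasible sets and using the independence of $U$ from $(W,Z,V)$.

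\textbf{Lower bound ($R_{\alpha,0} \le R_{\alpha,\epsilon,0}$).} First I would observe that any $h^\star:\mathcal{W}\times\mathcal{Z}\to[0,1]$ feasible for the original program in Equation~\ref{eq:cond_primal} can be lifted to a function $\bar h(u,w,z) = h^\star(w,z)$ on $[0,\epsilon]\times\mathcal{W}\times\mathcal{Z}$ that is trivially feasible for the $R_{\alpha,\epsilon,0}$ program, since the conditional-expectation constraint depends only on $w,z$. Because $\mu_{\epsilon,0}(U,W,Z) = \mu_0(W,Z) + U \ge \mu_0(W,Z)$ pointwise, evaluating the $\epsilon$-objective on $\bar h$ gives a value at least as large as the original objective at $h^\star$. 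Taking the supremum over $h^\star$ yields $R_{\alpha,\epsilon,0} \ge R_{\alpha,0}$.

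\textbf{Upper bound ($R_{\alpha,\epsilon,0} \le R_{\alpha,0} + \epsilon$).} For any feasible $h:[0,\epsilon]\times\mathcal{W}\times\mathcal{Z}\to[0,1]$, I would define its ``marginalization'' in $U$,
\[
\tilde h(w,z) \;=\; \mathbb{E}_U[h(U,w,z)] \;=\; \tfrac{1}{\epsilon}\int_0^\epsilon h(u,w,z)\,du,
\]
which satisfies $\tilde h(w,z) \in [0,1]$. Using independence of $U$ from $(W,Z)$ and Fubini, I would check that $\tilde h$ inherits the constraint: $\mathbb{E}_P[\tilde h(W,Z)\mid Z] = \mathbb{E}_P[h(U,W,Z)\mid Z] = 1-\alpha$ a.e., so $\tilde h$ is feasible for the original program. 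The objective then splits as
\[
\tfrac{1}{1-\alpha}\mathbb{E}_P[h(U,W,Z)\mu_{\epsilon,0}(U,W,Z)]
= \tfrac{1}{1-\alpha}\mathbb{E}_P[\tilde h(W,Z)\mu_0(W,Z)] + \tfrac{1}{1-\alpha}\mathbb{E}_P[h(U,W,Z)\,U].
\]
The first term is at most $R_{\alpha,0}$ by feasibility of $\tilde h$. For the second term I would use $U \le \epsilon$ a.s.\ together with the constraint $\mathbb{E}_P[h(U,W,Z)] = \mathbb{E}_P[\mathbb{E}_P[h(U,W,Z)\mid Z]] = 1-\alpha$, giving $\mathbb{E}_P[h\,U] \le \epsilon\,\mathbb{E}_P[h] = \epsilon(1-\alpha)$, so the second term is bounded by $\epsilon$. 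Taking the supremum over $h$ yields $R_{\alpha,\epsilon,0} \le R_{\alpha,0} + \epsilon$.

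\textbf{Main obstacle.} The only subtle step is verifying that the marginalization $\tilde h$ preserves the conditional constraint; this relies on $U \indep (W,Z)$ and an application of Fubini to swap the $U$-integral with the conditioning on $Z$. Once that is in hand, combining the two bounds gives $|R_{\alpha,\epsilon,0} - R_{\alpha,0}| \le \epsilon$.
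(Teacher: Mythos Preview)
Your proof is correct and follows essentially the same approach as the paper: the lower bound from $U \ge 0$, and the upper bound from $U \le \epsilon$ together with a marginalization over $U$ to pass from the enlarged feasible set back to the original one. Your treatment is in fact more explicit than the paper's, which simply asserts $\max_{\tilde h \in \tilde S}\mathbb{E}_P[\tilde h\,\mu_0] = \max_{h\in S}\mathbb{E}_P[h\,\mu_0]$ without spelling out the averaging step you carefully justify.
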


\begin{proof}
First, since $U$ has support in the non-negatives, we have $R_{\alpha,\epsilon,0} \geq R_{\alpha,0}$. Next, let $S = \{\mathcal{W}\times\mathcal{Z} \to [0,1] : \mathbb{E}_P[h\mid Z] = 1-\alpha ~~a.e.\}$ and $\tilde{S} = \{[0,\epsilon]\times\mathcal{W}\times\mathcal{Z} \to [0,1] : \mathbb{E}_P[h\mid Z] = 1-\alpha ~~a.e.\}$. Then,
\begin{align}
    R_{\alpha,\epsilon,0} &= \max_{\tilde{h}\in \tilde{S}} \frac{1}{1-\alpha}\mathbb{E}_P[\tilde{h}(\mu_0 + U)]\\
    &\leq \max_{\tilde{h}\in \tilde{S}} \frac{1}{1-\alpha}\mathbb{E}_P[\tilde{h}(\mu_0 + \epsilon)]\\
    &= \left(\max_{\tilde{h}\in \tilde{S}} \frac{1}{1-\alpha}\mathbb{E}_P[\tilde{h}\mu_0]\right) + \epsilon\\
    &= \left(\max_{h\in S} \frac{1}{1-\alpha}\mathbb{E}_P[h\mu_0]\right) + \epsilon\\
    &= R_{\alpha,0} + \epsilon.
\end{align}
\end{proof}

\section{Experimental Details}
\subsection{Dataset}
We loosely follow the setup of \citet{giannini2019machine} in deriving the dataset for training sepsis diagnosis models. The dataset contains electronic health record data collected over four years at our institution's hospital (Hospital A). The dataset consists of 278,947 emergency department patient encounters. The prevalence of the target disease, sepsis (S), is $2.1\%$. 17 features pertaining to vital signs (V) (heart rate, respiratory rate, temperature), lab tests (L) (white blood cell count [wbc], lactate), and demographics (D) (age, gender) were extracted. For encounters that resulted in sepsis (i.e., positive encounters), physiologic data available up until sepsis onset time was used. For non-sepsis encounters, all data available until discharge was used. For each of the time-series physiologic features (V and L), min, max, and median summary features were derived. Unlike vitals, lab measurements are not always ordered (O) and are subject to missingness (lactate $89\%$, wbc $27\%$). To model lab missingness, missingness indicators (O) for the lab features were added. The evaluation dataset was created using a held-out sample of 10,000 patients. The remaining data was used to train the two models. As per its definition, qSOFA was computed from respiratory rate, systolic blood pressure, and glasgow coma score (gcs) (gcs and blood pressure were separately extracted for these patients) \citep{singer2016third}. Using existing standards, we remapped gcs to the Alert, Voice, Pain, Unresponsive (AVPU) score which is required to compute qSOFA \citep{gardner2006value}.

\begin{figure}[!ht]
 \centering
	\includegraphics[scale=0.33]{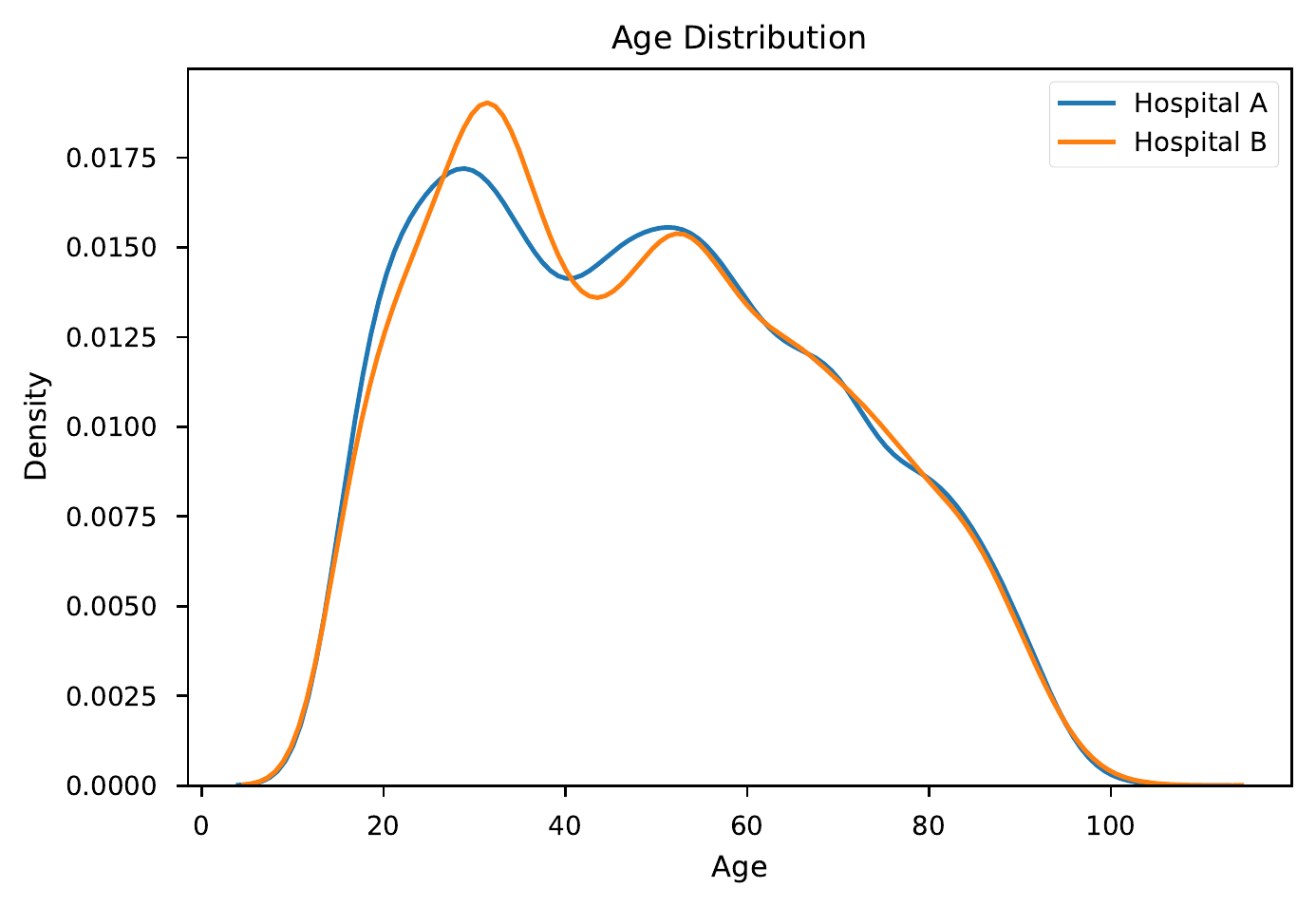}
  \caption{Age distributions at the two hospitals are very similar.}
  \label{fig:two-hosp-age}
\end{figure}

In Section 3.2.3 we use data from another Hospital B (within the same network as Hospital A used in the prior experiments). This dataset contains 96,574 patient encounters and has a sepsis prevalence of $2.8\%$ (vs $2.1\%$ at Hospital A). Turning to demographics, the population at Hospital A is $42\%$ female while at Hospital B it is $39\%$ female. Finally, Kernel Density Estimates (Fig \ref{fig:two-hosp-age}) of the age distributions at the two hospitals were very similar.  The missingness rates for lab orders were $28\%$ wbc missingness (unchanged from Hospital A) and $77\%$ lactate missingness ($12\%$ decrease in missingness). Thus, there is a sizeable increase in (lactate) test ordering patterns from Hospital A to Hospital B.

\subsection{Models}
\begin{figure}[!ht]
\centering
\begin{tikzpicture}[scale=0.8, transform shape]
          \def\unit{1.25}
            
            \node (d) at (0,0.25) [label=above:D,point];
            \node (s) at (-.75*\unit, -.5*\unit) [label=above:S,point]; 
            \node (o) at (0.75*\unit, -.5*\unit) [label=above:O,point];
            \node (v)at (-0.5*\unit, -1*\unit) [label=below:V,point];
            \node (l) at (0.5*\unit, -1*\unit) [label=below:L,point];
            
            \path (d) edge (s);
            \path (d) edge (o);
            \path (s) edge (o);
            \path (s) edge (v);
            \path (s) edge (l);
            \path (o) edge (l);
            \path (d) edge (v);
            \path (d) edge (l);

\end{tikzpicture}
\caption{Posited DAG of the data generating process used to train the robust model. The robust model was trained to be stable to shifts in the policy for ordering lab tests (O).}
\label{fig:sepsis-dag}
\end{figure}
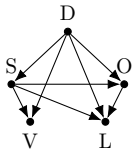

The \textbf{classical} model was a Random Forest Classifier trained using the implementation in scikit-learn \citep{pedregosa2011scikit}. The hyperparameters were tuned via grid search 5-fold cross-validation (CV) on the training dataset. The resulting tuned parameters were: number of trees 1000, min samples in a leaf 1, max depth 15, and min samples per split 2.

The \textbf{robust} model was an implementation of the surgery estimator \citep{subbaswamy2019preventing}, a causal method for training models which make predictions that are stable to pre-specified shifts. Using the DAG in Fig \ref{fig:sepsis-dag}, lab test ordering patterns are defined by the distribution $P(O|S,D)$. As opposed to the classical model, which models $P(S|V,D,O,L)$, the robust model models the \emph{interventional} distribution $P(S|V,D,do(O), L)$ which considers a hypothetical intervention on test ordering patterns. The robust model was fit by inverse probability weighting (IPW). First, we fit a logistic regression model of $P(O|S,D)$ using the training data. Then, a Random Forest model with the full feature set was trained using sample weights $\frac{1}{P(o_i|s_i,d_i)}$. These weights create a training dataset in which lab orders are approximately randomized w.r.t. $S$ and $D$. The tuned hyperparameters from the classical model were used as the hyperparameters for the robust model.

\subsection{Estimating Worst-Case Risk}
Estimating the Worst-Case risk using Algorithm \ref{alg:alg1} requires specifying the variable sets $W$ and $Z$ and fitting the models for the nuisance parameters $\mu_0$ (the conditional expected loss) and $\eta_0$ (the conditional quantile of $\mu_0$). To measure model robustness to changes in test ordering patterns we define $W=\{O\}$ and $Z=\{S,D\}$ (so that $P(W|Z)$ corresponds to test ordering policies). Since classification accuracy was the performance metric of interest, we chose $0-1$ loss as the loss function. Within Algorithm \ref{alg:alg1}, 10-fold CV was used (i.e., $K = 10$).

To fit the conditional expected loss $\hat{\mu}_k$, we used the scikit-learn Kernel Ridge Regression implementation with the RBF kernel which minimizes $\ell_2$ regularized mean squared error (MSE). Because $W$ contained all discrete variables, we applied Algorithm \ref{alg:alg2} with noise $U \sim Unif(0, 1 \times 10^{-5})$. The bandwidth and regularization parameters were tuned using a nested 5-fold CV on each estimation fold $k$ to which Algorithm \ref{alg:alg2} was applied. As $\hat{\mu}_k$ does not depend on $\alpha$, it was not refit for different $\alpha$ values. 

To model the conditional quantile function $\hat{\eta}_k$, we used $\ell_2$ regularized quantile regression with a b-spline basis expansion. We used a quantile b-spline basis expansion for Age and added an interaction term between $S$ and all other variables in $Z$ (Age expansion and Gender). The regularization constant $\lambda_k$ was chosen separately for each estimation fold $k$ using a nested 5-fold cross-validation and grid search to produce the lowest mean absolute deviation.

\end{document}